\DeclareMathOperator*{\argmax}{arg\,max}
\newtheorem*{repthm}{Theorem \ref{thm:crc_utility_opt}}
\newcommand{\R}{{\mathbb{R}}}
\newcommand{\E}{{\mathbb{E}}}
\newtheorem{theorem}{Theorem}
\newtheorem{definition}{Definition}
\title{Conformal Arbitrage: Risk-Controlled Balancing of Competing Objectives in Language Models
}
\author{%
  William Overman \\
  Graduate School of Business \\
  Stanford University \\
  \texttt{wpo@stanford.edu}
  \and
  Mohsen Bayati \\
  Graduate School of Business \\
  Stanford University \\
  \texttt{bayati@stanford.edu}
}
\begin{document}

\maketitle

\begin{abstract}
Modern language‑model deployments must often balance \emph{competing objectives}—for example, helpfulness versus harmlessness, cost versus accuracy, and reward versus safety.
We introduce \textbf{Conformal Arbitrage}, a post‑hoc framework that learns a data‑driven threshold to mediate between a \emph{Primary} model optimized for a primary objective and a more conservative \emph{Guardian}—which could be another model or a human domain expert—aligned with a guardrail objective. 
The threshold is calibrated with conformal risk control, yielding finite‑sample, distribution‑free guarantees that the long‑run frequency of undesirable events (such as factual errors or safety violations) does not exceed a user‑specified quota.  
Because Conformal Arbitrage operates wholly at the API level—without requiring access to model logits or updating model weights—it complements weight‑based alignment techniques and integrates seamlessly with existing cost‑aware cascades.  
Empirically, Conformal Arbitrage traces an efficient frontier, allowing users to define an acceptable performance level for one objective while maximizing utility in another. We observe that our method outperforms (in terms of accuracy) cost-matched random routing between models. These properties make Conformal Arbitrage a practical, theoretically grounded tool for trustworthy and economical deployment of large language models across a broad range of potentially competing objectives.
\end{abstract}

\section{Introduction}

Large language models (LLMs) excel at reasoning, coding, and open‑domain question answering, yet real‑world deployments frequently need to navigate tensions between potentially competing objectives such as \emph{helpfulness} and \emph{harmlessness} or \emph{cost} and \emph{accuracy}.

Current practices mostly tackle the tension between helpfulness and harmlessness by \emph{modifying the model itself}: reinforcement learning from human feedback (RLHF) \citep{christiano2017deep, ouyang2022training}, direct–preference optimisation (DPO) \citep{rafailov2023dpo}, Constitutional~AI \citep{bai2022constitutional}, or multi‑objective fine‑tuning \citep{zhou2023beyond, wang2024arithmetic} each produce a \emph{single} operating point along the Pareto frontier.
While powerful, these methods demand expensive data collection, GPU‑intensive retraining, and --- for API‑only models --- are often not applicable.

For the cost versus accuracy tradeoff, there has been significant work on cascades: a cheap model handles easy queries and defers the rest to a stronger fallback \citep{chen2023frugalgpt, aggarwal2025automix, zellinger2025earlyabstention}.
Recently, \cite{jung2025trust} introduced Cascaded Selective Evaluation (CSE), calibrating per-model confidence estimators via fixed-sequence multiple testing to obtain rigorous guarantees on alignment to human pairwise preferences. However, these approaches are tailored for controlling a binary disagreement risk, while a user may be interested in controlling arbitrary guardrail metrics at deployment time.

We introduce \textbf{Conformal Arbitrage (CA)}, a lightweight router that sits \emph{outside} the language models. The term ``arbitrage'' captures how our approach exploits the performance gap between specialized models to achieve superior outcomes than naive selection between models. Given \textbf{(i)} a \emph{Primary} model optimized for the primary objective and \textbf{(ii)} a more conservative \emph{Guardian} model or a human domain expert, aligned with a guardrail objective, CA offers a principled alternative to randomized routing between models. Instead of merely alternating between models with some probability, CA learns a single scalar threshold on how strongly the Primary model favors its top choice over alternatives (a notion we formally define as ``score gap'' later in the paper). This threshold determines when the Primary model's confidence is sufficient to act upon its prediction versus when to defer to the Guardian, creating a principled decision boundary that optimizes the trade-off between objectives.

The threshold is calibrated using \emph{conformal risk control} (CRC) \citep{angelopoulos2024conformal}, yielding \emph{finite‑sample, distribution‑free guarantees} that the long‑run frequency (or magnitude) of undesirable events never exceeds a user‑specified budget~$\alpha$. This enables precise control over trade-offs---users can explicitly specify how much they are willing to compromise on one objective to gain on the other. Because CA touches \emph{no model weights}, it complements weight‑based alignment and applies to closed, black‑box APIs, making it a remarkably lightweight approach to achieving Pareto improvements over simple model selection strategies.

Our experiments examine both the the cost versus accuracy tradeoff using the the TruthfulQA and MMLU benchmarks as well as the helpfulness versus harmlessness tradeoff using the PKU-SafeRLHF benchmark. Across all settings CA traces an efficient frontier that outperforms random or cost‑matched routing baselines.

Conformal Arbitrage transforms an immutable, potentially unpredictable LLM (or a family of LLMs) into a controllable system whose risk–utility position can be \emph{dialed after deployment}. In our experiments, we demonstrate this capability using state-of-the-art LLMs from the GPT-4.1 series, \cite{openai2025gpt41}, showing how our method enables fine-grained control over various tradeoffs without modifying the underlying models. By requiring only a few hundred logged examples for calibration, CA offers a pragmatic path toward trustworthy, cost‑efficient and customizable language‑model services that can be adjusted to meet evolving requirements long after initial deployment.

\section{Related work}

Real--world deployments must strike a pragmatic balance between \emph{helpfulness}---supplying users with accurate and detailed information---and \emph{harmlessness}---avoiding policy‐breaking or dangerous content.  Early alignment work framed the problem as a single--objective optimization: RLHF \citep{christiano2017deep,ouyang2022training} and its variant DPO \citep{rafailov2023dpo} collapse nuanced feedback into a \emph{single} reward model and therefore deliver one operating point on the Pareto frontier.  Subsequent methods introduced explicit two–factor training: RLHF on mixed helpful–harmless datasets \citep{bai2022training}, Constitutional~AI’s self‐revision loop \citep{bai2022constitutional}, and Bi‑Factorial Preference Optimisation (BFPO) \citep{zhang2025bifactorial} that casts the bi‑objective RLHF loss as a direct supervised criterion.  Safe‑RLHF \citep{safe_rlhf2023} separates a reward and a cost head and enforces constraints by Lagrangian relaxation, while Circuit Breakers intervene at generation time to halt policy‑violating continuations \citep{zou2024circuitbreaker}.

The PKU-SafeRLHF benchmark \citep{ji2023beavertails} was specifically introduced to quantify this helpfulness-harmlessness trade-off, providing dual annotations that enable researchers to measure progress on both dimensions simultaneously. Anthropic's Constitutional AI \citep{bai2022constitutional} further explores alignment by embedding principles directly into model training. More recently, the MoGU framework \citep{du2024mogu} dynamically routes between model variants optimized separately for usability and safety.
Empirically, while these approaches curb unsafe completions, they still lock the model into one fixed balance point between helpfulness and harmlessness.

Beyond helpfulness–harmlessness many other objectives--- accuracy, cost, latency, fairness, demographic parity, domain–specific risk, etc.---can be in conflict. Many recent works have proposed weight–based strategies to navigate the resulting frontiers between such competing objectives.  Rewarded soups linearly interpolates checkpoints fine‑tuned on distinct rewards to trace that surface \citep{rame2023rewarded}, Directional Preference Alignment adds multiple reward heads for steerable inference \citep{wang2024arithmetic}, MaxMin‑RLHF learns a mixture of reward models to protect minority preferences \citep{chakraborty2024maxmin}, and MO‑DPO converts several preference signals into a closed‑form multi‑objective loss \citep{zhou2023beyond}.  These approaches nevertheless share two limitations: \textbf{(i)} they require access to model weights and retraining, and \textbf{(ii)} they provide no theoretical guarantees that the inherent guardrail metrics driving the trade‑off (e.g., safety, accuracy, or cost) will stay within a user‑specified budget.

In contrast, our method of Conformal Arbitrage is weight‑agnostic and sits \emph{outside} the LLM.  By calibrating a single threshold with conformal risk control \citep{angelopoulos2024conformal}, it transforms any pair of black‑box models, one of which can be a human, into a \emph{continuum} of operating points with \emph{provable} finite‑sample bounds on the chosen guardrail metric (e.g. harmlessness). 

Conformal Arbitrage is thus closely tied to routing and cascade approaches that tackle cost--accuracy trade‑offs \citep{chen2023frugalgpt,yue2024large,ong2024routellm,aggarwal2025automix,zellinger2025earlyabstention,varangot2025doing}, but can be used to tackle any potential pair of objectives that may be in tension, thus abstractly covering cost–accuracy cascades as a special case. 

However, unlike these previous approaches we make no particular optimizations for any specific trade-off, including cost and accuracy, and we do not claim to out-perform such cascade systems on metrics for which they are explicitly optimized. Furthermore, compared to most routing approaches that rely on complex learned functions to distribute queries between models \citep{varangot2025doing}, Conformal Arbitrage employs a principled, theoretically-grounded method using a single calibrated scalar threshold. 


Scalable-oversight research explores how weaker agents or humans can be organized into critique hierarchies that amplify limited supervision.
Amplification and Debate delegate verification to inexpensive judges and, under certain complexity assumptions, achieve provable “weak-to-strong” guarantees \citep{christiano2018amplification,irving2018debate, burns2023weak}.
Process supervision instead labels intermediate reasoning steps so that mistakes are caught early \citep{lightman2023process}.
Self-reflection frameworks ask a model to generate critiques (and often revisions) of its own outputs \citep{madaan2023selfrefine,yang2024deepcritic,tang2024scrit}. 
Post-hoc risk control strategies in model deployment have also gained attention, particularly through moderation and oversight models deployed by industry leaders \citep{openai2023gpt4systemcard}.
Conformal Arbitrage complements these approaches with a statistically sound escalation rule: the Primary acts autonomously within a risk budget, else forwards a slimmed-down slate of actions to a human or Guardian. Finite-sample bounds from conformal risk control budget both the Guardian’s load and the residual risk, giving a lightweight post-hoc route to scalable oversight without retraining.

The underlying selective routing approach of our work resonates with classical selective prediction and reject-option frameworks initially formalized by \cite{chow1970optimum} and later refined in modern selective classification research \citep{geifman2019selectivenet}.

 Conformal prediction (CP) and its generalization, conformal risk control (CRC) \citep{vovk,bates2021distributionfree,angelopoulos2024conformal}, provide distribution-free, finite-sample guarantees that make them generally attractive post-hoc alignment tools for high-stakes LLM deployments. For instance, \cite{chen2025conformaltailriskcontrol} align language models with human risk judgments by controlling tail risks such as toxicity, while \cite{su2024apienoughconformalprediction} demonstrate conformal prediction applied effectively to black-box LLM APIs without internal access. Additionally, conformal risk control has been leveraged in deployment scenarios such as action deferral, illustrated by the KnowNo framework \citep{ren2023robots}, which uses conformal uncertainty quantification to trigger human oversight.
  
 Conformal prediction and conformal risk control have been used to filter low-confidence QA answers \citep{kumar2023conformal}, retain only entailment-supported sub-claims \citep{mohri2024language}, and bound hallucination rates via abstention \citep{abbasi2024mitigating}. Beyond marginal guarantees, conditional and adaptive CRC tighten coverage on hard prompts \citep{cherian2024large}, and sampling-based set prediction extends CP to free-text generation \citep{quach2024conformal}. Framing alignment as property testing, \citet{overman2024aligning} calibrate outputs to satisfy safety or fairness constraints without retraining. Building on this lineage, we adapt CRC to learn a risk-calibrated switch between a Primary model and a Guardian model without retraining either model.

Conformal Arbitrage is most closely related to \emph{Cascaded Selective Evaluation} (CSE) of \citet{jung2025trust}.  CSE equips each judge with a confidence score, calibrates a per-judge threshold, and escalates through a cascade until some judge is confident, thereby controlling the Bernoulli risk that a machine-preferred answer disagrees with human majority.  Conformal Arbitrage addresses more general tradeoffs: it controls \emph{any} bounded guardrail loss (safety, accuracy, cost, latency, etc.) and can filter a large action space to a smaller candidate set that a Guardian or human refines, rather than abstaining on the whole instance. CSE’s \textit{Simulated Annotators} requires $K$-shot prompting (for $K$ examples of preference annotations) the model $N$ different times (for $N$ human annotators) in order to obtain an ensemble prediction \emph{and} access to predictive probabilities extracted from the model's \texttt{logprobs}, so every judge call is multiplied many-fold and is limited to APIs that expose token-level logits.  Conformal Arbitrage, by contrast, needs at most \emph{one} call to the Primary and (when routed) \emph{one} to the Guardian, treats the returned scores as opaque, requiring no access to logits or probabilities, and thus works with strictly black-box APIs.


\section{Preliminaries}
\label{sec:preliminaries}

Conformal Arbitrage uses \emph{conformal risk control} (CRC) to supply finite‑sample, distribution‑free guarantees on the guardrail metric while treating the underlying language models as black boxes.  CRC extends the framework
of \emph{conformal prediction} (CP)
\citep{vovk,bates2021distributionfree} from binary
error control to control of \emph{arbitrary bounded risks}.  We briefly
summarize both ideas.

\paragraph{Conformal prediction}
Let $\mathcal{X}$ and $\mathcal{Y}$ be the input and output spaces, equipped with a joint probability distribution, and draw an exchangeable sample $(X_i,Y_i)_{i=1}^{n+1}\!\sim\!P$ where the first $n$ sample are used for calibration, and $(X_{n+1},Y_{n+1})$ is used for testing.  
Given any predictor $f:\mathcal{X}\!\to\!\mathcal{Y}$ and score $s_f(x,y)$ (e.g.\ $|y-f(x)|$), let $q_{1-\alpha}$ be the $(1-\alpha)$ empirical quantile of $\{s_f(X_i,Y_i)\}_{i=1}^{n}$.  
The conformal set is defined by
$C(x)=\{\,y\in\mathcal{Y}\!:\,s_f(x,y)\le q_{1-\alpha}\}$, and
enjoys the finite-sample guarantee
$\Pr\{\,Y_{n+1}\!\notin\!C(X_{n+1})\}\le\alpha$.
Thus any black-box predictor attains $(1-\alpha)$ coverage without distributional assumptions \citep{vovk,bates2021distributionfree}.

\paragraph{Conformal risk control}
Many real‑world objectives are not binary mistakes but expectations of
a task‑specific loss—for example, safety‑violation rate, factual errors, mean latency,
or excess dollar cost.  Conformal risk control \citep{angelopoulos2024conformal} handles
such objectives by introducing a \emph{bounded, non‑increasing} loss
curve $L_i(\lambda)\in[0,B]$, where $B$ is an upper bound on the loss, for each calibration point, indexed by a
tunable threshold $\lambda\in\Lambda \subset\mathbb{R}$.  Defining the empirical risk
$\hat R_n(\lambda)=\tfrac1n\sum_{i=1}^{n}L_i(\lambda)$, CRC selects
\begin{equation}
  \hat{\lambda}
  \;=\;
  \inf\Bigl\{
       \lambda\in\Lambda:
       \frac{n}{n+1}\,\hat R_n(\lambda)+\frac{B}{n+1}\le\alpha
     \Bigr\},
  \label{eq:crc}
\end{equation}
and proves the \emph{finite‑sample} guarantee for 
\(
\mathbb{E}\bigl[L_{n+1}(\hat\lambda)\bigr]\le\alpha,
\)
again under assumption of exchangeability between the calibration data and test point.  Choosing
$L_i(\lambda)=\mathbb{I}\{Y_i\notin C_\lambda(X_i)\}$ recovers
classical CP; alternative losses yield risk bounds tailored to
deployment needs.


\section{Methodology: conformal arbitrage}

We aim to invoke a \emph{Primary} model as often as possible
(e.g.\ a helpfulness-maximizing or low-cost model) while ensuring, with high
confidence, that a critical requirement (e.g.\ harmlessness, accuracy) is
satisfied by routing calls to a \emph{Guardian} model (or human) as needed. The linkage between the two models is formalized through conformal risk control \citep{angelopoulos2024conformal}.

\subsection{Setting}

Let \(\{x_i\}_{i\ge 1}\) be an exchangeable sequence of
\(\mathcal{X}\)-valued random variables that we refer to as \emph{contexts}. Each context $x$ admits a finite, non-empty action set
\(
  A(x_i) = \mathcal{A}_i \subseteq\mathcal{A}\), where \( |A(x_i)|<\infty.
\)
Additionally, we assume the existence of two functions $L:\mathcal{X} \times \mathcal{P}(\mathcal{A}) \to \mathbb{R}$ and $U:\mathcal{X} \times \mathcal{P}(\mathcal{A}) \to \mathbb{R}$, measuring, over subsets of the potential actions, loss for the guardrail metric and utility for the primary metric, respectively. We assume both of these functions satisfy the property that for $\mathcal{A}_1 \subseteq \mathcal{A}_2$ we have $L(x,\mathcal{A}_1) \geq L(x,\mathcal{A}_2)$ and $U(x,\mathcal{A}_1) \geq U(x,\mathcal{A}_2)$.

We assume access to two fixed, pre-trained models: $p,g:\mathcal{X} \times \mathcal{A} \to \mathbb{R}$,
where $p$ is the \textbf{Primary} model (reward-seeking or cheap/low-accuracy) and $g$ is the \textbf{Guardian} model (safety-focused or costly/high-accuracy). Despite this simple interface, each model may internally implement arbitrarily complex computations—any architecture that outputs a score for each $(x,a)$ pair is admissible.



Although we write \(p(x,a)\) and \(g(x,a)\) as deterministic, each model
call may depend on internal randomness \(\zeta_P,\zeta_G\), producing
scores \(\tilde p(x,a,\zeta_P)\) and \(\tilde g(x,a,\zeta_G)\).
Such tuples
\((x,\tilde p,\tilde g)\) remain exchangeable across samples, so the
finite‑sample guarantees of conformal risk control are unaffected.

\subsection{Calibration via conformal risk control}
\label{sec:crc}

To calibrate our Conformal Arbitrage policy, we use conformal risk control (CRC) to calibrate a relaxation parameter $\hat{\lambda}$ that satisfies a user-defined risk budget $\alpha\in(0,1)$, controlling how much we can trust the Primary model before deferring to the Guardian.

We begin with an exchangeable calibration set of $n$ samples:
\[
  \mathcal{D}^{(n)}
  =\bigl\{(x_i,P_i,G_i)\bigr\}_{i=1}^{n},
\quad
  P_i=\{p(x_i,a)\}_{a\in\mathcal{A}_i},\quad
  G_i=\{g(x_i,a)\}_{a\in\mathcal{A}_i}.
\]
Each sample consists of a context $x_i$ and the scores assigned by both the Primary model and the Guardian model across the available action set $\mathcal{A}_i = A (x_i)$.

For any $\lambda \ge 0$, we define the \emph{$\lambda$-relaxed candidate set}:
\[
  C_{\lambda}(x)
  =\Bigl\{
      a\in A(x):
      p(x,a)\ge\max_{a'\in A(x)}p(x,a')-\lambda
    \Bigr\}.
\]
This set includes all actions whose Primary scores are within $\lambda$ of the top score. In particular, larger values of $\lambda$ increase the size of this set. Since all of the subsets $\mathcal{A}' \subseteq A(x)$ that we will consider will be of this form, $C_\lambda(x)$, for some $\lambda$, we adopt the notation $L_i(\lambda) = L(x_i,C_\lambda(x_i))$ and $U_i(\lambda) = U(x_i,C_\lambda(x_i))$ 

We then define a loss function on each calibration sample, measuring the \emph{residual risk} that the Guardian model would assign to the best action in $C_\lambda(x_i)$:
\begin{equation}
    L_i(\lambda)=
  \max_{a\in A(x_i)} g(x_i,a)-\max_{a\in C_{\lambda}(x_i)} g(x_i,a).
  \label{eq:CA_loss}
\end{equation}

Intuitively, this loss captures how unsafe the most promising action (as judged by the Guardian) is among the candidates the Primary model would consider acceptable under $\lambda$.

To summarize overall risk, we compute the empirical average:
\[
  \hat R_n(\lambda)\;=\;\frac1n\sum_{i=1}^{n}L_i(\lambda),
\]
and select the smallest $\lambda$ that satisfies the CRC inequality:
\begin{equation}
  \hat{\lambda}
  \;=\;
  \inf\Bigl\{
        \lambda\ge 0:
        \frac{n}{n+1}\,\hat R_n(\lambda)+\frac{1}{n+1}\;\le\;\alpha
      \Bigr\}.
  \label{eq:lambdahat}
\end{equation}

\begin{definition}[Relaxation Parameter]
The relaxation parameter $\hat{\lambda}$
is defined as the minimal value of $\lambda$ that satisfies the conformal risk control inequality in Equation~\ref{eq:lambdahat}. 
\end{definition}

This relaxation parameter controls the permissiveness of the candidate action set while ensuring that the expected residual risk on a new context remains bounded by $\alpha$. The guarantee holds exactly at finite sample size and requires no assumptions on score calibration or context distribution.

\subsection{Conformal arbitrage algorithm}

We now describe the deployment-time decision procedure for selecting actions using the calibrated relaxation parameter $\hat{\lambda}$ obtained in Section~\ref{sec:crc}. At each test instance, the agent first consults the Primary model to form a $\hat{\lambda}$-relaxed candidate set. If the top action is sufficiently dominant (i.e., the set is a singleton), it is selected; otherwise, the Guardian model selects from the $\lambda$-relaxed set. The procedure is outlined in Algorithm~\ref{alg:crc_hybrid}.

\begin{algorithm}
\caption{Conformal Arbitrage}
\label{alg:crc_hybrid}
\begin{algorithmic}[1]
\REQUIRE Context $x$, relaxation parameter $\hat{\lambda}$, Primary model $p$, Guardian model $g$
\STATE Compute $p(x,a)$ for all $a \in \mathcal{A}(x)$
\STATE Let $C_\lambda(x) = \left\{a \in A(x) : p(x,a) \ge \max_{a'} p(x,a') - \hat{\lambda} \right\}$
\IF{$|C_\lambda(x)| = 1$}
    \RETURN the unique element of $C_\lambda(x)$
\ELSE
    \STATE Compute $g(x,a)$ for all $a \in C_\lambda(x)$
    \RETURN $a^\star = \arg\max_{a \in C_\lambda(x)} G(a)$
\ENDIF
\end{algorithmic}
\end{algorithm}

\subsection{Optimality amongst score-gap routers}
\label{sec:theory}

The fact that Algorithm \ref{alg:crc_hybrid} ensures an upperbound on the loss of our guardrail metric $\E[L(x,C_\lambda(x))] \leq \alpha$ simply follows from the guarantees of conformal risk control \cite{angelopoulos2024conformal}. Now to address utility as measured by the primary metric we define the following class of policies, ``Score-gap routers," in Definition \ref{def:score_gap_router}. Additionally, for this theoretical result, we will require a stronger assumption of i.i.d. on the calibration data and test point. 


\vspace{0.6em}
\begin{definition}[Score-gap router]
\label{def:score_gap_router}
Fix a \textbf{Primary} score function
$p: \mathcal{X} \times \mathcal{A} \to \mathbb{R}$
and a non–negative threshold $\lambda\ge 0$.
For each context $x$ let
\[
      a^\star(x)\;=\;\argmax_{a\in A(x)} p(x,a),\quad
      \Delta(x)\;=\;
      p\bigl(x,a^\star(x)\bigr)-
      \max_{b\in A(x)\setminus\{a^\star(x)\}} p(x,b),
\]
with the convention $\Delta(x)=+\infty$ if $| A(x)|=1$.
The \emph{score-gap router} with threshold $\lambda$,
\(
      \mathcal R_\lambda:\mathcal X \to 
        \mathcal A\cup\{\textsc{defer}\}
\)
acts as
\[
      \mathcal R_\lambda(x)=
      \begin{cases}
            a^\star(x),         & \text{if }\Delta(x)\ge\lambda,\\[4pt]
            \textsc{defer}, & \text{otherwise},
      \end{cases}
\]
where \textsc{defer} means “forward this instance to the
\textbf{Guardian} model.”
\end{definition}

Given the Primary model’s confidence scores
$p(x,a)$, it chooses the top‑scoring action whenever its margin over every alternative exceeds a non‑negative threshold~$\lambda$, and \textbf{defers} to the Guardian otherwise. This rule mirrors Chow’s Bayes-optimal \emph{reject-option} classifier
\citep{chow1970optimum}: rather than rejecting an uncertain instance we escalate it to a more conservative model.

Theorem~\ref{thm:crc_utility_opt} establishes that \emph{no other Score-gap router of the Primary scores alone} can deliver strictly higher expected primary utility while still obeying the same guardrail risk budget $\alpha$, up to a vanishing $O(n^{-1})$ term. We let our Primary metric be measured by $U(\lambda) = \E[U_i(\lambda)]$, which we assume to be non-increasing and $K$-Lipschitz. This is natural as raising $\lambda$ can only shrink the set of contexts on which we choose the Primary model’s output. The proof of Theorem~\ref{thm:crc_utility_opt} is provided in Appendix \ref{app:crc_opt_full}.



\begin{theorem}[Utility–optimality of Conformal Arbitrage]
\label{thm:crc_utility_opt}
Fix a compact interval $\Lambda=[0,\lambda_{\max}]$.
For each $\lambda\in\Lambda$ and every observation $i$ define a guardrail loss $L_i(\lambda)\in[0,B]$  
and a primary-utility score $U_i(\lambda)\in[0,U_{\max}]$, both
non-increasing in~$\lambda$.
Write
\[
      R(\lambda)=\E[L_i(\lambda)],\qquad
      U(\lambda)=\E[U_i(\lambda)].
\]
Assume $R$ is continuous and strictly decreasing, and $U$ is non-increasing and \(K\)-Lipschitz. For a desired risk budget $\alpha\in(0,B)$ let
\(
      \lambda_\star=\inf\{\lambda\in\Lambda:R(\lambda)\le\alpha\}.
\)
Given an i.i.d. calibration sample $\mathcal{D}^{(n)}$ of size $n$, set
\[
      \widehat R_n(\lambda)=\frac1n\sum_{i=1}^{n}L_i(\lambda),\qquad
      \hat\lambda=\inf\Bigl\{\lambda\in\Lambda:
         \tfrac{n}{n+1}\,\widehat R_n(\lambda)+\tfrac{B}{n+1}\le\alpha
      \Bigr\}.
\]
Then, with expectation taken over the calibration sample 
\begin{align}
      \E\!\bigl[U(\lambda_\star)-U(\hat\lambda)\bigr]
            &=O(n^{-1}),                \label{eq:util}\\
      \E\!\Bigl[
          \sup_{\substack{\tilde\lambda\in\Lambda\\R(\tilde\lambda)\le\alpha}}
              U(\tilde\lambda)-U(\hat\lambda)
        \Bigr]
            &=O(n^{-1}).                \label{eq:minimax}
\end{align}
\end{theorem}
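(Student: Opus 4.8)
The plan is to collapse both displays into a single statement, then split the utility gap into a deterministic ``conservativeness'' bias plus a stochastic fluctuation, and finally control the fluctuation in expectation --- where the real difficulty is squeezing an $O(n^{-1/2})$-scale fluctuation down to $O(n^{-1})$ by exploiting that the empirical-risk error is centred.

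First I would dispense with \eqref{eq:minimax}. Since $R$ is continuous and strictly decreasing on the compact interval $\Lambda$, the feasible set $\{\tilde\lambda\in\Lambda:R(\tilde\lambda)\le\alpha\}$ equals $[\lambda_\star,\lambda_{\max}]$ with $R(\lambda_\star)=\alpha$ (the boundary cases $\lambda_\star\in\{0,\lambda_{\max}\}$ being immediate), and since $U$ is non-increasing the supremum over that set is attained at $\lambda_\star$. Hence the bracket in \eqref{eq:minimax} equals the one in \eqref{eq:util}, so it suffices to prove $\E[U(\lambda_\star)-U(\hat\lambda)]=O(n^{-1})$.

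Next I would pin down the noiseless object that $\hat\lambda$ estimates. Rewriting the CRC rule, $\hat\lambda=\inf\{\lambda\in\Lambda:\widehat R_n(\lambda)\le\alpha_n\}$ with $\alpha_n:=\tfrac{(n+1)\alpha-B}{n}=\alpha-\tfrac{B-\alpha}{n}$; set $\lambda_n:=\inf\{\lambda\in\Lambda:R(\lambda)\le\alpha_n\}\ge\lambda_\star$. Inverting $R(\lambda_\star)-R(\lambda_n)=\alpha-\alpha_n=\tfrac{B-\alpha}{n}$ through the strict decrease of $R$ gives $\lambda_n-\lambda_\star=O(n^{-1})$, so $K$-Lipschitzness of $U$ yields the deterministic bound $0\le U(\lambda_\star)-U(\lambda_n)\le K(\lambda_n-\lambda_\star)=O(n^{-1})$, leaving only $\E[U(\lambda_n)-U(\hat\lambda)]$ to control. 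I would write $\widehat R_n(\lambda)=R(\lambda)+Z_n(\lambda)$ where $Z_n(\lambda)=\tfrac1n\sum_i(L_i(\lambda)-R(\lambda))$ is a centred average of i.i.d.\ $[-B,B]$-bounded terms, so $\E[Z_n(\lambda)]=0$ and $Z_n(\lambda)$ is sub-Gaussian at scale $n^{-1/2}$ by Hoeffding. The inclusion $\{\hat\lambda>\lambda_n+t\}\subseteq\{Z_n(\lambda_n+t)>\alpha_n-R(\lambda_n+t)\}$ and its mirror below $\lambda_n$ already give the crude estimate $\E|\hat\lambda-\lambda_n|=O(n^{-1/2})$, hence $\E|U(\lambda_n)-U(\hat\lambda)|=O(n^{-1/2})$. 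To upgrade to $O(n^{-1})$ I would instead analyse the \emph{signed} gap: linearizing $R(\hat\lambda)+Z_n(\hat\lambda)=\alpha_n=R(\lambda_n)$ and replacing $Z_n(\hat\lambda)$ by $Z_n(\lambda_n)$ up to a lower-order oscillation term gives $\hat\lambda-\lambda_n=-(R^{-1})'(\alpha_n)\,Z_n(\lambda_n)+(\text{remainder})$; taking expectations annihilates the leading term because $\E[Z_n(\lambda_n)]=0$, so $\E[\hat\lambda-\lambda_n]$ and $\E[(\hat\lambda-\lambda_n)^2]$ are both $O(n^{-1})$, and a local expansion of $U$ about $\lambda_n$ gives $\E[U(\lambda_n)-U(\hat\lambda)]=O(n^{-1})$. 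Together with the bias bound this proves \eqref{eq:util}, and hence \eqref{eq:minimax}.

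The main obstacle is precisely this last sharpening. Because the CRC threshold genuinely wobbles at scale $n^{-1/2}$ about its target, the triangle-inequality bound $K\,\E|\hat\lambda-\lambda_\star|$ is too lossy --- only $O(n^{-1/2})$ --- and the improvement to $O(n^{-1})$ relies entirely on cancellation of the mean-zero first-order fluctuation. Making this rigorous requires (i) a stochastic-equicontinuity estimate for $Z_n(\hat\lambda)-Z_n(\lambda_n)$ over the random $O_P(n^{-1/2})$-length interval between $\hat\lambda$ and $\lambda_n$, and (ii) enough local regularity near $\lambda_\star$ --- for instance $R\in C^1$ with $R'$ bounded away from $0$ (which is also what upgrades the bias term above from $o(1)$ to a genuine $O(n^{-1})$) and $U$ with a Lipschitz derivative there, so that the $K$-Lipschitz surrogate does not wash out the cancellation. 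An alternative packaging of the same idea invokes the two-sided conformal-risk-control bound $|\E[R(\hat\lambda)]-\alpha|=O(n^{-1})$ together with $\mathrm{Var}(R(\hat\lambda))=O(n^{-1})$ and a second-order expansion through the monotone link $U=(U\circ R^{-1})\circ R$, again needing that link map to be $C^{1,1}$.
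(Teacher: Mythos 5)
Your reduction of \eqref{eq:minimax} to \eqref{eq:util} (the feasible set is $[\lambda_\star,\lambda_{\max}]$ and $U$ is non-increasing) and your isolation of the deterministic conservativeness bias $\lambda_n-\lambda_\star=O(n^{-1})$ are both correct and match what is needed. The genuine gap is exactly the step you flag as the main obstacle: upgrading $\E[U(\lambda_n)-U(\hat\lambda)]$ from $O(n^{-1/2})$ to $O(n^{-1})$. Your cancellation argument requires $U$ to have a Lipschitz derivative near $\lambda_\star$ (so the mean-zero linear fluctuation survives passage through $U$) and $R\in C^1$ with $R'$ bounded away from zero; the first of these is not among the theorem's hypotheses, and under the stated $K$-Lipschitz assumption alone your own accounting shows the route stalls at $O(n^{-1/2})$: for instance $U(\lambda)=c-K(\lambda-\lambda_\star)_+$ is non-increasing and $K$-Lipschitz yet converts the generic $\E[(\hat\lambda-\lambda_\star)_+]\asymp n^{-1/2}$ fluctuation directly into a utility gap of that order. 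So as written the proposal is a plan whose decisive step is neither carried out nor carriable out from the hypotheses you allow yourself; you would need either extra smoothness of $U$ or a different source of the $O(n^{-1})$ rate.

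That different source is what the paper uses, and it makes the proof very short: it invokes the tightness result of Angelopoulos et al.\ (their Theorem~2), which supplies the risk \emph{lower} bound $\E[L_{n+1}(\hat\lambda)]\ge\alpha-\tfrac{2B}{n+1}$ to complement the standard CRC upper bound $\le\alpha$. This sandwiches $R(\hat\lambda)$ within $\tfrac{2B}{n+1}$ of $\alpha=R(\lambda_\star)$; the paper then inverts $R$ (introducing, just as you would need, the assumption $m=\inf_\Lambda|R'|>0$) to get $|\hat\lambda-\lambda_\star|\le 2B/(m(n+1))$ and finishes with the Lipschitz property of $U$. No empirical-process or stochastic-equicontinuity analysis appears at all. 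Your closing remark about ``an alternative packaging'' via the two-sided bound $|\E[R(\hat\lambda)]-\alpha|=O(n^{-1})$ is essentially the paper's actual argument, except the paper applies it with only the Lipschitz link rather than a $C^{1,1}$ one. If you want to complete your write-up, the cleanest fix is to import that lower bound rather than re-derive the concentration of $\hat\lambda$ from scratch — while noting (as your own analysis correctly suggests) that the sandwich is a statement about an expectation over the calibration draw rather than a pointwise one, so converting it into a bound on $\E[U(\lambda_\star)-U(\hat\lambda)]$ still calls for the same one-sided care you identified.
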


\section{Experiments}
\label{sec:experiments}

We test Conformal Arbitrage on two different trade-off settings: a \textbf{cost–accuracy} axis using the multiple-choice datasets TruthfulQA and MMLU, and a \textbf{helpfulness–harmlessness} axis using PKU-SafeRLHF. Each experiment follows the same protocol: we draw a calibration split and use the loss given by Equation \ref{eq:CA_loss} to fit the CRC threshold $\hat{\lambda}$ using Equation \ref{eq:lambdahat}. We evaluate the guardrail risk and primary utility of Conformal Arbitrage on a disjoint evaluation split, and compare against single-model baselines and random routers. We report the results for TruthfulQA and PKU-SafeRLHF in the main text; the results for MMLU are qualitatively similar and appear in Appendix \ref{app:mmlu}.

\subsection{TruthfulQA: cost versus accuracy}

We first study Conformal Arbitrage on the multiple‑choice split of \textsc{TruthfulQA}
 \citep{lin2022truthful}, a benchmark designed to expose factual misconceptions in language models.\footnote{\url{https://huggingface.co/datasets/EleutherAI/truthful_qa_mc}}  The benchmark contains 684 questions, each paired with four answer choices and exactly one correct label. Here we consider our primary objective to be minimizing cost, while the guardrail metric is factual accuracy.

\paragraph{Experimental set-up}
The Primary model is \texttt{gpt‑4.1‑nano‑2025‑04‑14}; the
Guardian model is its larger counterpart
\texttt{gpt‑4.1‑2025‑04‑14}. This is the natural choice considering that our primary and guardrail metrics are cost and accuracy, respectively.\footnote{We use prices from
\url{https://openai.com/api/pricing/} on May 15, 2025.} Both are queried in a zero‑shot, multiple‑choice
format that elicits a real‑valued confidence score in \([0,1]\) for each
option.  We use \texttt{temperature=0.1}, \texttt{max\_tokens=50}; replies that fail JSON parsing default to uniform scores, maintaining exchangeability. Exact prompts appear in Appendix~\ref{app:truthfulqa_prompt_full}.

We keep the Primary’s raw scores, but binarize the Guardian’s as
\(g(x,a)=1\) if \(a\) is its top-ranked answer \emph{and} correct, and \(0\) otherwise. Thus, when the Guardian answers correctly we assign confidence 1 to the correct
choice and 0 to the three distractors; when it answers incorrectly we assign
0 to \emph{every} choice, reflecting total uncertainty.  
This binarization is \emph{not required}—one could instead feed the Guardian’s
real‑valued scores into Conformal Arbitrage, but this binarization makes the exposition crisper: the calibrated risk level
\(\alpha\) now translates directly to an \(\alpha\!\times\!100\%\) drop in
accuracy relative to the accuracy of the Guardian. See Appendix \ref{app:guardian_base_scores} for results of using the real-valued scores directly.
With Equation~\ref{eq:CA_loss} the loss is
\(
  L_i(\lambda)=\mathbf{1}\{\text{Guardian correct and } C_\lambda(x_i)\not\ni a^\star \}
\) for $a^\star \;=\; \arg\max_{a\in A(x_i)} g(x_i,a)$.
Conformal risk control chooses the smallest $\hat{\lambda}$ whose empirical mean loss is $\le \alpha$; e.g., $\alpha = 0.10$ guarantees the overall accuracy falls by at most ten percentage points relative to an always-Guardian policy.

Each trial draws \(n=400\) calibration and \(N=284\) test questions. We fit \(\hat\lambda\) via Eq.~\eqref{eq:lambdahat} on \(\Lambda=\{0,0.01,\dots,1\}\) and repeat the calibration–evaluation loop 30 times with fresh random splits.

For a baseline comparison we compare the performance of Conformal Arbitrage to a random router that for each risk level \(\alpha\) matches the average cost of our method but chooses the acting model \emph{uniformly at random}, thereby controlling cost without calibration.

\paragraph{Results}
Figure~\ref{fig:truthfulqa_results} and Table~\ref{tab:truqa_main} show that CA traces an efficient cost–accuracy frontier, beating the cost-matched random router at every risk level except \(\alpha=0.25\) while always respecting the \(\alpha\)-level guardrail budget.  Tightening \(\alpha\) from \(\alpha=0.25\) to
\(0.05\) raises accuracy from \(0.62\) to \(0.81\) at \(2.6 \times \) the cost. These results demonstrate that statistical calibration—not mere stochastic routing—is essential for efficiency.

\begin{figure}[h]
  \centering
  \includegraphics[width=0.7\textwidth]{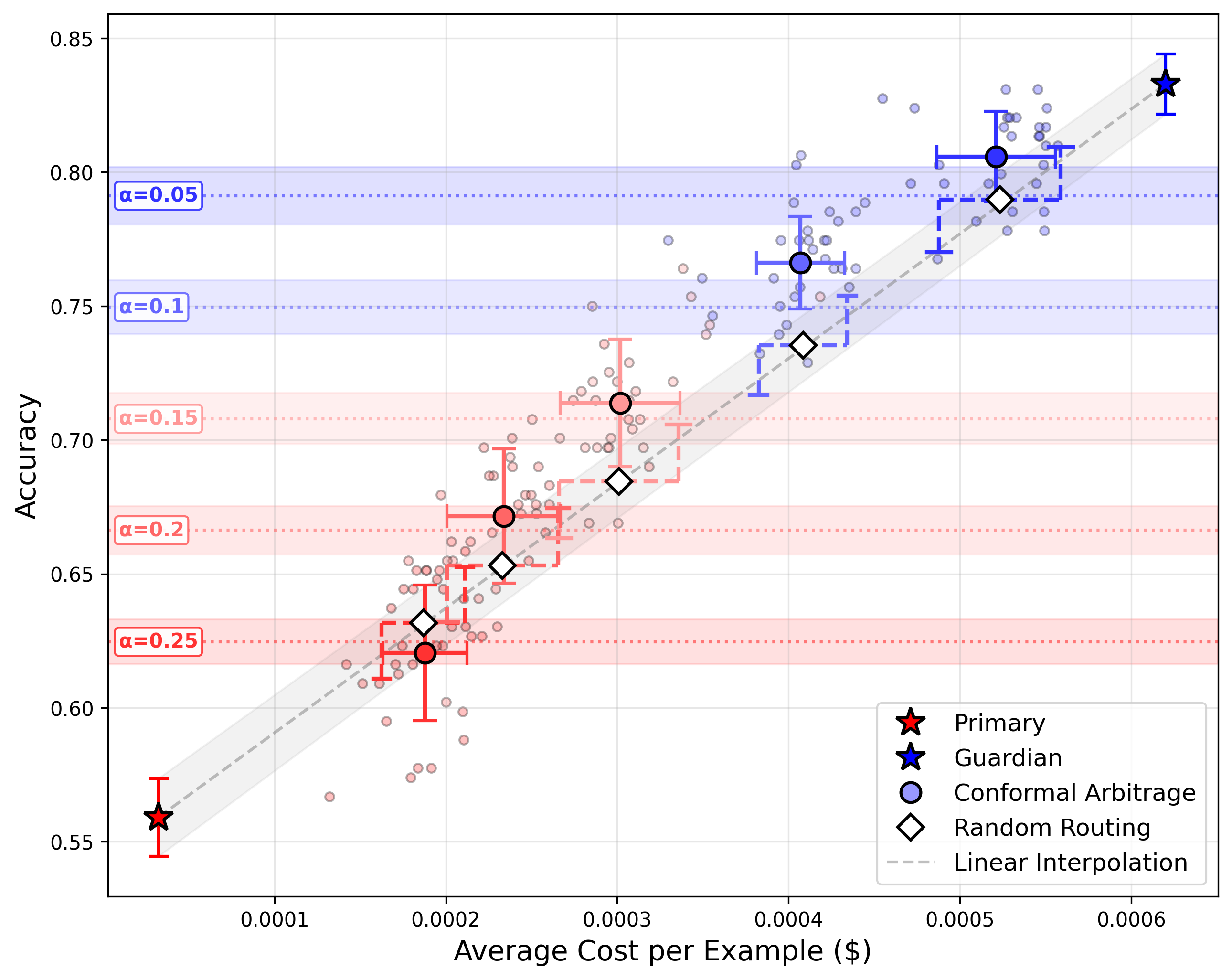}
  \caption{Accuracy vs.\ cost (TruthfulQA), mean $\pm$ 1 std over 30 trials; small points show individual CA runs.}
  \label{fig:truthfulqa_results}
\end{figure}

\begin{table}[h]
\centering
\caption{Accuracy, cost per 1000 examples, $\hat{\lambda}$, $\Delta$ above random baseline, and Guardian usage (mean ± std over 30 trials). Calibration size $n=400$.}
\label{tab:truqa_main}
\begin{tabular}{lccccc}
\toprule
\textbf{Policy} & \textbf{Accuracy} & \textbf{Cost (\$/1000)} & $\boldsymbol{\hat{\lambda}}$ & $\boldsymbol{\Delta}$ & \textbf{Guardian \%} \\
\midrule
Primary       & $0.559 \pm 0.015$ & $0.032 \pm 0.000$ & --             & --      & $0.0\%$          \\
CA ($\alpha=0.25$) & $0.621 \pm 0.025$ & $0.188 \pm 0.024$ & $0.277 \pm 0.067$ & $-0.011$ & $27.7 \pm 3.9\%$ \\
CA ($\alpha=0.20$) & $0.672 \pm 0.025$ & $0.234 \pm 0.033$ & $0.403 \pm 0.058$ & $+0.019$ & $34.3 \pm 5.3\%$ \\
CA ($\alpha=0.15$) & $0.714 \pm 0.024$ & $0.302 \pm 0.035$ & $0.529 \pm 0.059$ & $+0.029$ & $44.9 \pm 5.7\%$ \\
CA ($\alpha=0.10$) & $0.766 \pm 0.017$ & $0.407 \pm 0.026$ & $0.706 \pm 0.031$ & $+0.031$ & $62.1 \pm 4.4\%$ \\
CA ($\alpha=0.05$) & $0.806 \pm 0.017$ & $0.521 \pm 0.035$ & $0.867 \pm 0.040$ & $+0.016$ & $78.9 \pm 5.6\%$ \\
Guardian       & $0.833 \pm 0.011$ & $0.620 \pm 0.001$ & --             & --      & $100.0\%$        \\
\bottomrule
\end{tabular}
\end{table}

\paragraph{Ablation studies} Across ablations CA’s frontier stays stable. First, varying the calibration split (300, 400, 500 points; Appendix~\ref{app:calibration_size}) lifts accuracy by only a point or two with flat cost, matching theory that a few hundred examples suffice~\citep{angelopoulos2022gentle}. Second, feeding CA the Guardian’s raw scores instead of the 0/1 binarization nudges accuracy up under tight risk budgets and down by a similar amount when the budget loosens (Appendix~\ref{app:guardian_base_scores}). Third, letting the Guardian operate on the \emph{full} action set rather than the $\hat\lambda$-relaxed subset (unrestricted routing, Appendix~\ref{app:truqa_unrestricted}) raises accuracy a few points at roughly 10\% extra cost; because the Primary still acts on the same contexts while the Guardian’s menu only expands, the finite-sample risk bound is unaffected, though the primary metric (cost) can overshoot the target. Finally, swapping the Primary \texttt{gpt-4.1-nano} for the stronger but pricier \texttt{gpt-4.1-mini} (Appendix~\ref{app:truqa_mini_41}) lifts the low-cost end of the frontier by about 0.22 accuracy points. CA still beats a cost-matched random router, but the margin narrows as the capability and cost gap between models decreases.


\subsection{PKU‑SafeRLHF: helpfulness versus harmlessness}

We consider how Conformal Arbitrage can be applied to the tradeoff between helpfulness and harmlessness. The \textsc{PKU‑SafeRLHF} corpus contains $\sim\!90$k prompts, each paired with two distinct LLM responses.\footnote{\url{https://huggingface.co/datasets/PKU-Alignment/PKU-SafeRLHF}}  Each response is annotated for (i) which response is \emph{more helpful}, (ii) which is \emph{safer}, and (iii) a severity label $\text{sev}\in\{0,1,2,3\}$ indicating the extent of the safety violation (higher is worse). We retain only the rows where the two responses differ in severity level and where the more helpful answer is \emph{not} the safer answer. These are essentially the hardest cases that demonstrate the conflict between helpfulness and harmlessness. This leaves $N=3,552$ examples.

\paragraph{Experiment set-up}
We report two quantities: \emph{Empirical Human Alignment}, the fraction of prompts whose chosen answer matches the annotator-preferred (more-helpful) reply, and \emph{Safety-violation loss}, the excess severity of the selected answer above the safer one (\(0\) is ideal, larger is worse). The Primary model \texttt{gpt-4.1-2025-04-14} is instructed (Appendix~\ref{app:pku_details}) to assign a real-valued helpfulness score \(p(x,a)\in[0,1]\) to every candidate reply while \emph{ignoring} policy concerns.  The Guardian is a deterministic rule that selects the lowest-severity reply,
\(g(x,a)=\mathbbm{1}\{\text{sev}(x,a)=\min_{a'}\text{sev}(x,a')\}\), mimicking a human safety judge.

Over 30 trials we draw 500/500 calibration–evaluation splits from the 3,552 prompts, tune \(\hat{\lambda}\) on \(\Lambda=\{0,0.0025,\dots,1\}\), and evaluate at risk budgets \(\alpha\in\{0.10,0.20,\dots,0.60\}\).  Baselines are (i) \emph{Primary-only} (\(\arg\max_a p(x,a)\)), (ii) \emph{Guardian-only} (lowest-severity reply), and (iii) a \emph{random router} that calls the Guardian with \(p\in\{0.2,0.4,0.5,0.6,0.8\}\).

\paragraph{Results}
Fig.~\ref{fig:pku_plot} shows that
Conformal Arbitrage traces an efficient frontier between
helpfulness and harmlessness. Exact numerical results are given in Appendix \ref{app:pku_numerical}. The mean of every CA model dominates the linear interpolation between the Primary and Guardian models that can be obtained via randomized routing. Additionally CA meets the finite‑sample guarantee
\(\mathbb{E}[L]\!\le\!\alpha\) for every guardrail budget $\alpha$, as indicated by the mean of each point falling to the left of its corresponding vertical target. 

\begin{figure}[h]
  \centering
  \includegraphics[width=0.7\textwidth]{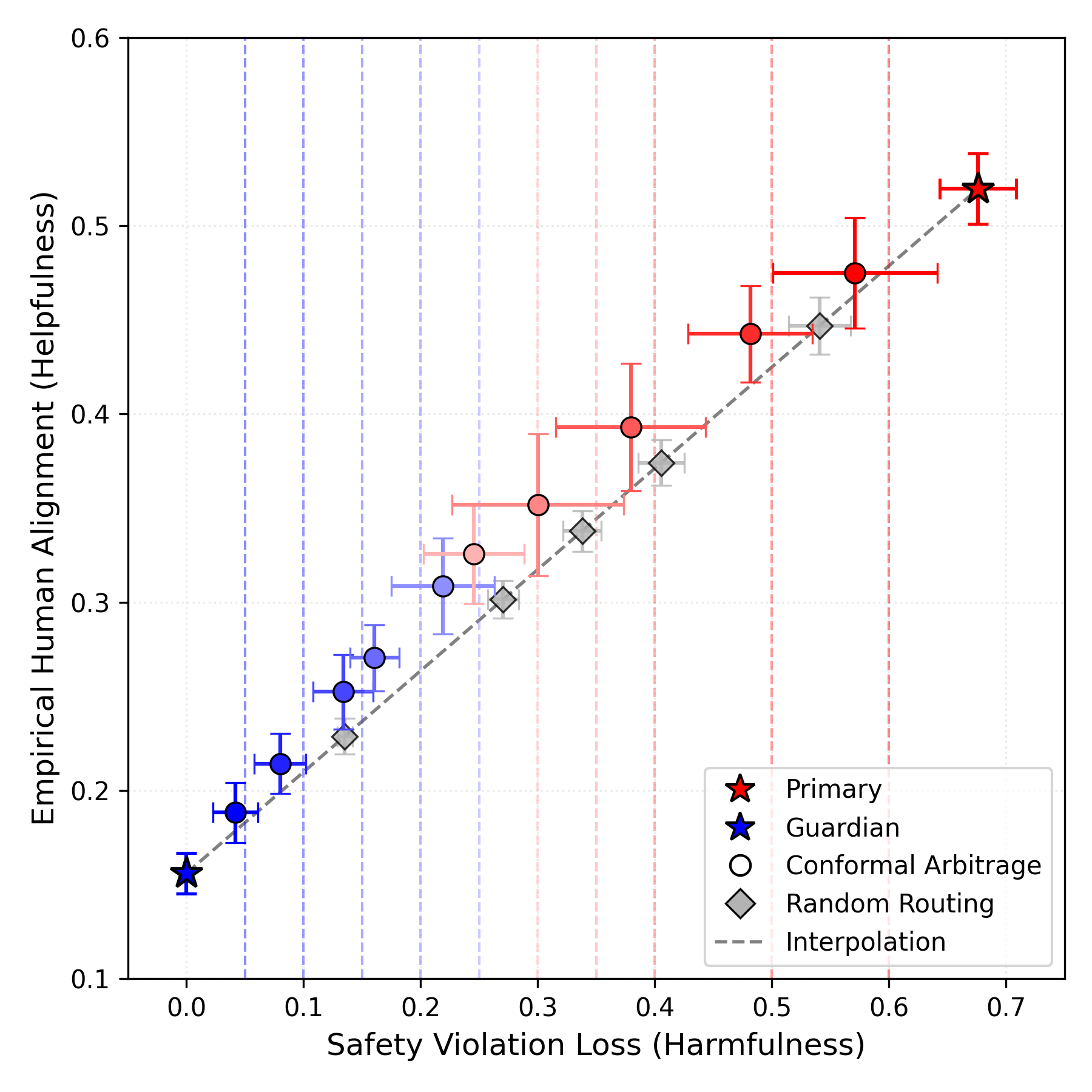}
  \caption{Harmfulness vs.\ helpfulness (PKU-SafeRLHF), mean $\pm$ 1 std over 30 trials.}
  \label{fig:pku_plot}
\end{figure}


\section{Conclusion}
\label{sec:conclusion}

Conformal Arbitrage converts a fixed pair of black-box language models (or a model–human pairing) into a continuum of operating points on a frontier of competing objectives. By calibrating a single score-gap threshold with
conformal risk control, CA supplies finite-sample, distribution-free guarantees
that a user-chosen guard-rail metric stays within budget while maximizing a
second objective such as accuracy, helpfulness, or cost efficiency.  Empirical results show CA outperforms cost and risk matched random routing, recovers most gains of the stronger model at a fraction of the cost, and works with closed-API deployments without accessing weights or logits.

\paragraph{Limitations \& future work} Our study is confined to multiple-choice tasks; applying Conformal Arbitrage to free-text generation would require bespoke loss functions. We forgo task-specific optimizations (e.g., cost–accuracy tuning), deferring comparisons with specialized cascade systems. We analyze only a single-step, two-model router—deeper cascades may be possible. Next steps include (i) integrating adaptive CRC \citep{blot2025automatically}, (ii) adding tailored optimizations to benchmark against state-of-the-art cascades, and (iii) extending CA to multi-model cascades and agentic pipelines.

\bibliographystyle{plainnat}
\bibliography{references}

\begin{thebibliography}{48}
\providecommand{\natexlab}[1]{#1}
\providecommand{\url}[1]{\texttt{#1}}
\expandafter\ifx\csname urlstyle\endcsname\relax
  \providecommand{\doi}[1]{doi: #1}\else
  \providecommand{\doi}{doi: \begingroup \urlstyle{rm}\Url}\fi

\bibitem[Abbasi-Yadkori et~al.(2024)Abbasi-Yadkori, Kuzborskij, Stutz, György, Fisch, Doucet, Beloshapka, Weng, Yang, Szepesvári, Cemgil, and Tomasev]{abbasi2024mitigating}
Yasin Abbasi-Yadkori, Ilja Kuzborskij, David Stutz, András György, Adam Fisch, Arnaud Doucet, Iuliya Beloshapka, Wei-Hung Weng, Yao-Yuan Yang, Csaba Szepesvári, Ali~Taylan Cemgil, and Nenad Tomasev.
\newblock Mitigating llm hallucinations via conformal abstention.
\newblock \emph{arXiv preprint arXiv:2405.01563}, 2024.
\newblock URL \url{https://arxiv.org/abs/2405.01563}.

\bibitem[Aggarwal et~al.(2025)Aggarwal, Madaan, Anand, Potharaju, Mishra, Zhou, Gupta, Rajagopal, Kappaganthu, Yang, Upadhyay, Faruqui, and Mausam]{aggarwal2025automix}
Pranjal Aggarwal, Aman Madaan, Ankit Anand, Srividya~Pranavi Potharaju, Swaroop Mishra, Pei Zhou, Aditya Gupta, Dheeraj Rajagopal, Karthik Kappaganthu, Yiming Yang, Shyam Upadhyay, Manaal Faruqui, and Mausam.
\newblock Automix: Automatically mixing language models.
\newblock In \emph{Proceedings of the 38th Conference on Neural Information Processing Systems (NeurIPS)}, 2025.
\newblock arXiv:2310.12963.

\bibitem[Angelopoulos and Bates(2022)]{angelopoulos2022gentle}
Anastasios~N. Angelopoulos and Stephen Bates.
\newblock A gentle introduction to conformal prediction and distribution-free uncertainty quantification, 2022.
\newblock URL \url{https://arxiv.org/abs/2107.07511}.

\bibitem[Angelopoulos et~al.(2024)Angelopoulos, Bates, Fisch, Lei, and Schuster]{angelopoulos2024conformal}
Anastasios~Nikolas Angelopoulos, Stephen Bates, Adam Fisch, Lihua Lei, and Tal Schuster.
\newblock Conformal risk control.
\newblock In \emph{The Twelfth International Conference on Learning Representations}, 2024.

\bibitem[Bai et~al.(2022{\natexlab{a}})Bai, Jones, Ndousse, Askell, Chen, DasSarma, Drain, Fort, Ganguli, Henighan, Joseph, Kadavath, Kernion, Conerly, El-Showk, Elhage, Hatfield-Dodds, Hernandez, Hume, Johnston, Kravec, Lovitt, Nanda, Olsson, Amodei, Brown, Clark, McCandlish, Olah, Mann, and Kaplan]{bai2022training}
Yuntao Bai, Andy Jones, Kamal Ndousse, Amanda Askell, Anna Chen, Nova DasSarma, Dawn Drain, Stanislav Fort, Deep Ganguli, Tom Henighan, Nicholas Joseph, Saurav Kadavath, Jackson Kernion, Tom Conerly, Sheer El-Showk, Nelson Elhage, Zac Hatfield-Dodds, Danny Hernandez, Tristan Hume, Scott Johnston, Shauna Kravec, Liane Lovitt, Neel Nanda, Catherine Olsson, Dario Amodei, Tom Brown, Jack Clark, Sam McCandlish, Chris Olah, Ben Mann, and Jared Kaplan.
\newblock Training a helpful and harmless assistant with reinforcement learning from human feedback, 2022{\natexlab{a}}.
\newblock URL \url{https://arxiv.org/abs/2204.05862}.

\bibitem[Bai et~al.(2022{\natexlab{b}})Bai, Kadavath, Kundu, Askell, Kernion, Jones, Chen, Goldie, Mirhoseini, McKinnon, Chen, Olsson, Olah, Hernandez, Drain, Ganguli, Li, Tran-Johnson, Perez, Kerr, Mueller, Ladish, Landau, Ndousse, Lukosuite, Lovitt, Sellitto, Elhage, Schiefer, Mercado, DasSarma, Lasenby, Larson, Ringer, Johnston, Kravec, Showk, Fort, Lanham, Telleen-Lawton, Conerly, Henighan, Hume, Bowman, Hatfield-Dodds, Mann, Amodei, Joseph, McCandlish, Brown, and Kaplan]{bai2022constitutional}
Yuntao Bai, Saurav Kadavath, Sandipan Kundu, Amanda Askell, Jackson Kernion, Andy Jones, Anna Chen, Anna Goldie, Azalia Mirhoseini, Cameron McKinnon, Carol Chen, Catherine Olsson, Christopher Olah, Danny Hernandez, Dawn Drain, Deep Ganguli, Dustin Li, Eli Tran-Johnson, Ethan Perez, Jamie Kerr, Jared Mueller, Jeffrey Ladish, Joshua Landau, Kamal Ndousse, Kamile Lukosuite, Liane Lovitt, Michael Sellitto, Nelson Elhage, Nicholas Schiefer, Noemi Mercado, Nova DasSarma, Robert Lasenby, Robin Larson, Sam Ringer, Scott Johnston, Shauna Kravec, Sheer~El Showk, Stanislav Fort, Tamera Lanham, Timothy Telleen-Lawton, Tom Conerly, Tom Henighan, Tristan Hume, Samuel~R. Bowman, Zac Hatfield-Dodds, Ben Mann, Dario Amodei, Nicholas Joseph, Sam McCandlish, Tom Brown, and Jared Kaplan.
\newblock Constitutional ai: Harmlessness from ai feedback, 2022{\natexlab{b}}.
\newblock URL \url{https://arxiv.org/abs/2212.08073}.

\bibitem[Bates et~al.(2021)Bates, Angelopoulos, Lei, Malik, and Jordan]{bates2021distributionfree}
Stephen Bates, Anastasios Angelopoulos, Lihua Lei, Jitendra Malik, and Michael~I. Jordan.
\newblock Distribution-free, risk-controlling prediction sets, 2021.

\bibitem[Blot et~al.(2025)Blot, Angelopoulos, Jordan, and Brunel]{blot2025automatically}
Vincent Blot, Anastasios~N Angelopoulos, Michael~I Jordan, and Nicolas J-B Brunel.
\newblock Automatically adaptive conformal risk control, 2025.
\newblock URL \url{https://arxiv.org/abs/2406.17819}.

\bibitem[Burns and et~al.(2023)]{burns2023weak}
Collin Burns and et~al.
\newblock Weak-to-strong generalization: Eliciting strong capabilities with weak supervision.
\newblock \emph{arXiv preprint arXiv:2312.09390}, 2023.

\bibitem[Chakraborty et~al.(2024)Chakraborty, Qiu, Yuan, Koppel, Huang, Manocha, Bedi, and Wang]{chakraborty2024maxmin}
Souradip Chakraborty, Jiahao Qiu, Hui Yuan, Alec Koppel, Furong Huang, Dinesh Manocha, Amrit Bedi, and Mengdi Wang.
\newblock Maxmin-rlhf: Towards equitable alignment of large language models with diverse human preferences.
\newblock In \emph{ICML Workshop on Models of Human Feedback for AI Alignment}, 2024.

\bibitem[Chen et~al.(2025)Chen, Shen, Deng, and Lei]{chen2025conformaltailriskcontrol}
Catherine Yu-Chi Chen, Jingyan Shen, Zhun Deng, and Lihua Lei.
\newblock Conformal tail risk control for large language model alignment, 2025.
\newblock URL \url{https://arxiv.org/abs/2502.20285}.

\bibitem[Chen et~al.(2023)Chen, Zaharia, and Zou]{chen2023frugalgpt}
Lingjiao Chen, Matei Zaharia, and James Zou.
\newblock Frugalgpt: How to use large language models while reducing cost and improving performance.
\newblock \emph{arXiv preprint arXiv:2305.05176}, 2023.

\bibitem[Cherian et~al.(2024)Cherian, Gibbs, and Candès]{cherian2024large}
John~J. Cherian, Isaac Gibbs, and Emmanuel~J. Candès.
\newblock Large language model validity via enhanced conformal prediction methods.
\newblock In \emph{Advances in Neural Information Processing Systems}, volume~37, 2024.
\newblock URL \url{https://proceedings.neurips.cc/paper_files/paper/2024/hash/d02ff1aeaa5c268dc34790dd1ad21526-Abstract-Conference.html}.

\bibitem[Chow(1970)]{chow1970optimum}
C.~K. Chow.
\newblock On optimum recognition error and reject trade-off.
\newblock \emph{IEEE Transactions on Information Theory}, 16\penalty0 (1):\penalty0 41--46, 1970.

\bibitem[Christiano et~al.(2018)Christiano, Shlegeris, and Amodei]{christiano2018amplification}
Paul Christiano, Evan Shlegeris, and Dario Amodei.
\newblock Supervising strong learners by amplifying weak experts.
\newblock In \emph{arXiv preprint arXiv:1810.08575}, 2018.

\bibitem[Christiano et~al.(2017)Christiano, Leike, Brown, Martic, Legg, and Amodei]{christiano2017deep}
Paul~F Christiano, Jan Leike, Tom Brown, Miljan Martic, Shane Legg, and Dario Amodei.
\newblock Deep reinforcement learning from human preferences.
\newblock In \emph{Advances in Neural Information Processing Systems}, volume~30, 2017.

\bibitem[Dai et~al.(2023)Dai, Pan, Sun, Ji, Xu, Liu, Wang, and Yang]{safe_rlhf2023}
Josef Dai, Xuehai Pan, Ruiyang Sun, Jiaming Ji, Xinbo Xu, Mickel Liu, Yizhou Wang, and Yaodong Yang.
\newblock Safe rlhf: Safe reinforcement learning from human feedback.
\newblock \emph{arXiv preprint arXiv:2310.12773}, 2023.

\bibitem[Du et~al.(2024)Du, Zhao, Zhao, Ma, Chen, Huo, Yang, Xu, and Qin]{du2024mogu}
Yanrui Du, Sendong Zhao, Danyang Zhao, Ming Ma, Yuhan Chen, Liangyu Huo, Qing Yang, Dongliang Xu, and Bing Qin.
\newblock Mogu: A framework for enhancing safety of llms while preserving their usability.
\newblock In A.~Globerson, L.~Mackey, D.~Belgrave, A.~Fan, U.~Paquet, J.~Tomczak, and C.~Zhang, editors, \emph{Advances in Neural Information Processing Systems}, volume~37, pages 87569--87591. Curran Associates, Inc., 2024.
\newblock URL \url{https://proceedings.neurips.cc/paper_files/paper/2024/file/9f7f063144103bf6debb09a3f15e00fb-Paper-Conference.pdf}.

\bibitem[Geifman and El-Yaniv(2019)]{geifman2019selectivenet}
Yonatan Geifman and Ran El-Yaniv.
\newblock {S}elective{N}et: A deep neural network with an integrated reject option.
\newblock In Kamalika Chaudhuri and Ruslan Salakhutdinov, editors, \emph{Proceedings of the 36th International Conference on Machine Learning}, volume~97 of \emph{Proceedings of Machine Learning Research}, pages 2151--2159. PMLR, 09--15 Jun 2019.

\bibitem[Hendrycks et~al.(2021)Hendrycks, Burns, Basart, Zou, Mazeika, Song, and Steinhardt]{hendrycks2021measuring}
Dan Hendrycks, Collin Burns, Steven Basart, Andy Zou, Mantas Mazeika, Dawn Song, and Jacob Steinhardt.
\newblock Measuring massive multitask language understanding, 2021.

\bibitem[Irving et~al.(2018)Irving, Christiano, and Amodei]{irving2018debate}
Geoffrey Irving, Paul Christiano, and Dario Amodei.
\newblock Ai safety via debate.
\newblock \emph{arXiv preprint arXiv:1805.00899}, 2018.

\bibitem[Ji et~al.(2023)Ji, Liu, Dai, Pan, Zhang, Bian, Chen, Sun, Wang, and Yang]{ji2023beavertails}
Jiaming Ji, Mickel Liu, Juntao Dai, Xuehai Pan, Chi Zhang, Ce~Bian, Boyuan Chen, Ruiyang Sun, Yizhou Wang, and Yaodong Yang.
\newblock Beavertails: Towards improved safety alignment of {LLM} via a human-preference dataset.
\newblock In \emph{Thirty-seventh Conference on Neural Information Processing Systems Datasets and Benchmarks Track}, 2023.
\newblock URL \url{https://openreview.net/forum?id=g0QovXbFw3}.

\bibitem[Jung et~al.(2025)Jung, Brahman, and Choi]{jung2025trust}
Jaehun Jung, Faeze Brahman, and Yejin Choi.
\newblock Trust or escalate: Llm judges with provable guarantees for human agreement.
\newblock \emph{arXiv preprint arXiv:2407.18370}, 2025.

\bibitem[Kumar et~al.(2023)Kumar, Lu, Gupta, Palepu, Bellamy, Raskar, and Beam]{kumar2023conformal}
Bhawesh Kumar, Charles Lu, Gauri Gupta, Anil Palepu, David Bellamy, Ramesh Raskar, and Andrew Beam.
\newblock Conformal prediction with large language models for multi-choice question answering.
\newblock In \emph{Proceedings of the ICML 2023 Workshop on Neural Conversational AI: Teaching Machines to Converse}, 2023.
\newblock URL \url{https://arxiv.org/abs/2305.18404}.

\bibitem[Lightman et~al.(2023)Lightman, Nangia, and Bowman]{lightman2023process}
Sam Lightman, Nikita Nangia, and Samuel~R. Bowman.
\newblock Process supervision improves mathematical reasoning in chain-of-thought models.
\newblock \emph{arXiv preprint arXiv:2305.20050}, 2023.

\bibitem[Lin et~al.(2022)Lin, Hilton, and Evans]{lin2022truthful}
Stephanie Lin, Jacob Hilton, and Owain Evans.
\newblock Truthfulqa: Measuring how models mimic human falsehoods, 2022.
\newblock URL \url{https://arxiv.org/abs/2109.07958}.

\bibitem[Madaan et~al.(2023)Madaan, Tu, Chen, Tsvetkov, and Neubig]{madaan2023selfrefine}
Aman Madaan, Guangtao Tu, Yiming Chen, Yulia Tsvetkov, and Graham Neubig.
\newblock Self‑refine: Iterative refinement with self‑feedback.
\newblock In \emph{Proceedings of the 61st Annual Meeting of the Association for Computational Linguistics}, 2023.

\bibitem[Mohri and Hashimoto(2024)]{mohri2024language}
Christopher Mohri and Tatsunori Hashimoto.
\newblock Language models with conformal factuality guarantees.
\newblock \emph{arXiv preprint arXiv:2402.10978}, 2024.
\newblock URL \url{https://arxiv.org/abs/2402.10978}.

\bibitem[Ong et~al.(2024)Ong, Patil, Agarwal, Gupta, Liu, Chen, Liang, and Hashimoto]{ong2024routellm}
Isaac Ong, Pranav Patil, Shivang Agarwal, Harsh Gupta, Nelson~F. Liu, Yanda Chen, Percy Liang, and Tatsunori Hashimoto.
\newblock Routellm: Learning to route llms with preference data.
\newblock \emph{arXiv preprint arXiv:2406.18665}, 2024.

\bibitem[OpenAI(2023)]{openai2023gpt4systemcard}
OpenAI.
\newblock Gpt-4 system card, 2023.
\newblock https://openai.com/blog/gpt-4.

\bibitem[{OpenAI}(2025)]{openai2025gpt41}
{OpenAI}.
\newblock Introducing gpt-4.1 in the api, April 2025.
\newblock URL \url{https://openai.com/index/gpt-4-1/}.
\newblock Accessed: 2025-05-15.

\bibitem[Ouyang et~al.(2022)Ouyang, Wu, Jiang, Almeida, Wainwright, Mishkin, Zhang, Agarwal, Slama, Ray, Schulman, Hilton, Kelton, Miller, Simens, Askell, Welinder, Christiano, Leike, and Lowe]{ouyang2022training}
Long Ouyang, Jeffrey Wu, Xu~Jiang, Diogo Almeida, Carroll Wainwright, Pamela Mishkin, Chong Zhang, Sandhini Agarwal, Katarina Slama, Alex Ray, John Schulman, Jacob Hilton, Fraser Kelton, Luke Miller, Maddie Simens, Amanda Askell, Peter Welinder, Paul~F Christiano, Jan Leike, and Ryan Lowe.
\newblock Training language models to follow instructions with human feedback.
\newblock In S.~Koyejo, S.~Mohamed, A.~Agarwal, D.~Belgrave, K.~Cho, and A.~Oh, editors, \emph{Advances in Neural Information Processing Systems}, volume~35, pages 27730--27744. Curran Associates, Inc., 2022.
\newblock URL \url{https://proceedings.neurips.cc/paper_files/paper/2022/file/b1efde53be364a73914f58805a001731-Paper-Conference.pdf}.

\bibitem[Overman et~al.(2024)Overman, Vallon, and Bayati]{overman2024aligning}
William Overman, Jacqueline~Jil Vallon, and Mohsen Bayati.
\newblock Aligning model properties via conformal risk control.
\newblock In \emph{Advances in Neural Information Processing Systems}, volume~37, 2024.
\newblock URL \url{https://proceedings.neurips.cc/paper_files/paper/2024/hash/c79625091a4f8b5d3abe29f3b14fa43a-Abstract-Conference.html}.

\bibitem[Quach et~al.(2024)Quach, Fisch, Schuster, Yala, Sohn, Jaakkola, and Barzilay]{quach2024conformal}
Victor Quach, Adam Fisch, Tal Schuster, Adam Yala, Jae~Ho Sohn, Tommi~S. Jaakkola, and Regina Barzilay.
\newblock Conformal language modeling.
\newblock In \emph{Proceedings of the Twelfth International Conference on Learning Representations (ICLR)}, 2024.
\newblock URL \url{https://arxiv.org/abs/2306.10193}.

\bibitem[Rafailov et~al.(2023)Rafailov, Sharma, Mitchell, Ermon, Manning, and Finn]{rafailov2023dpo}
Rafael Rafailov, Archit Sharma, Eric Mitchell, Stefano Ermon, Christopher~D. Manning, and Chelsea Finn.
\newblock Direct preference optimization: Your language model is secretly a reward model.
\newblock \emph{arXiv preprint arXiv:2305.18290}, 2023.

\bibitem[Ram\'e et~al.(2023)Ram\'e, Couairon, Shukor, Dancette, Gaya, Soulier, and Cord]{rame2023rewarded}
Alexandre Ram\'e, Guillaume Couairon, Mustafa Shukor, Corentin Dancette, Jean-Baptiste Gaya, Laure Soulier, and Matthieu Cord.
\newblock Rewardedsoups: Towards pareto-optimal alignment by interpolating weights fine-tuned on diverse rewards.
\newblock In \emph{NeurIPS}, 2023.

\bibitem[Ren et~al.(2023)Ren, Dixit, Bodrova, Singh, Tu, Brown, Xu, Takayama, Xia, Varley, Xu, Sadigh, Zeng, and Majumdar]{ren2023robots}
Allen~Z. Ren, Anushri Dixit, Alexandra Bodrova, Sumeet Singh, Stephen Tu, Noah Brown, Peng Xu, Leila Takayama, Fei Xia, Jake Varley, Zhenjia Xu, Dorsa Sadigh, Andy Zeng, and Anirudha Majumdar.
\newblock Robots that ask for help: Uncertainty alignment for large language model planners.
\newblock In \emph{7th Annual Conference on Robot Learning}, 2023.
\newblock URL \url{https://openreview.net/forum?id=4ZK8ODNyFXx}.

\bibitem[Su et~al.(2024)Su, Luo, Wang, and Cheng]{su2024apienoughconformalprediction}
Jiayuan Su, Jing Luo, Hongwei Wang, and Lu~Cheng.
\newblock Api is enough: Conformal prediction for large language models without logit-access, 2024.
\newblock URL \url{https://arxiv.org/abs/2403.01216}.

\bibitem[Tang et~al.(2024)Tang, Anil, Chung, Chen, Dai, and Zoph]{tang2024scrit}
Yunhao Tang, Rohan Anil, Hyung~Won Chung, Zhang Chen, Zhifeng Dai, and Barret Zoph.
\newblock Scrit: Self‑evolving critic for scalable oversight.
\newblock \emph{arXiv preprint arXiv:2403.09613}, 2024.

\bibitem[Varangot-Reille et~al.(2025)Varangot-Reille, Caelen, Goffinet, Baumann, Chauvet, and von Platen]{varangot2025doing}
Clovis Varangot-Reille, Olivier Caelen, Emelyne Goffinet, Alison Baumann, Alexandre Chauvet, and Patrick von Platen.
\newblock Doing more with less -- implementing routing strategies in large language model-based systems: An extended survey.
\newblock \emph{arXiv preprint arXiv:2502.00409}, 2025.

\bibitem[Vovk et~al.(2005)Vovk, Gammerman, and Shafer]{vovk}
Vladimir Vovk, Alexander Gammerman, and Glenn Shafer.
\newblock \emph{Algorithmic Learning in a Random World, Second Edition}.
\newblock January 2005.
\newblock \doi{10.1007/978-3-031-06649-8}.
\newblock Springer-Verlag New York, Inc. 2005.

\bibitem[Wang et~al.(2024)Wang, Lin, Xiong, Yang, Diao, Qiu, Zhao, and Zhang]{wang2024arithmetic}
Haoxiang Wang, Yong Lin, Wei Xiong, Rui Yang, Shizhe Diao, Shuang Qiu, Han Zhao, and Tong Zhang.
\newblock Arithmetic control of llms for diverse user preferences: Directional preference alignment with multi-objective rewards.
\newblock In \emph{ACL}, 2024.

\bibitem[Yang et~al.(2024)Yang, Fu, Wang, Yao, Welleck, Levin, Nie, Cho, and Weston]{yang2024deepcritic}
Hanjiang Yang, Tianyu Fu, Xu~Wang, Yao Yao, Sean Welleck, Etienne Levin, Anqi Nie, Kyunghyun Cho, and Jason Weston.
\newblock Deepcritic: Large language model critics for scalable oversight.
\newblock \emph{arXiv preprint arXiv:2402.05497}, 2024.

\bibitem[Yue et~al.(2024)Yue, Zhao, Zhang, Du, and Yao]{yue2024large}
Murong Yue, Jie Zhao, Min Zhang, Liang Du, and Ziyu Yao.
\newblock Large language model cascades with mixture of thought representations for cost-efficient reasoning.
\newblock In \emph{The Twelfth International Conference on Learning Representations}, 2024.
\newblock URL \url{https://openreview.net/forum?id=6okaSfANzh}.

\bibitem[Zellinger et~al.(2025)Zellinger, Liu, and Thomson]{zellinger2025earlyabstention}
Michael~J. Zellinger, Rex Liu, and Matt Thomson.
\newblock Cost-saving llm cascades with early abstention.
\newblock \emph{arXiv preprint arXiv:2502.09054}, 2025.

\bibitem[Zhang et~al.(2025)Zhang, Torr, Elhoseiny, and Bibi]{zhang2025bifactorial}
Wenxuan Zhang, Philip Torr, Mohamed Elhoseiny, and Adel Bibi.
\newblock Bi-factorial preference optimization: Balancing safety-helpfulness in language models.
\newblock In \emph{The Thirteenth International Conference on Learning Representations}, 2025.
\newblock URL \url{https://openreview.net/forum?id=GjM61KRiTG}.

\bibitem[Zhou et~al.(2023)Zhou, Liu, Yang, Shao, Liu, Yue, Ouyang, and Qiao]{zhou2023beyond}
Zhanhui Zhou, Jie Liu, Chao Yang, Jing Shao, Yu~Liu, Xiangyu Yue, Wanli Ouyang, and Yu~Qiao.
\newblock Beyond one-preference-fits-all alignment: Multi-objective direct preference optimization.
\newblock \emph{arXiv preprint arXiv:2310.03708}, 2023.

\bibitem[Zou et~al.(2024)Zou, Phan, Wang, Duenas, Lin, Andriushchenko, Wang, Kolter, Fredrikson, and Hendrycks]{zou2024circuitbreaker}
Andy Zou, Long Phan, Justin Wang, Derek Duenas, Maxwell Lin, Maksym Andriushchenko, Rowan Wang, Zico Kolter, Matt Fredrikson, and Dan Hendrycks.
\newblock Improving alignment and robustness with circuit breakers.
\newblock \emph{arXiv preprint arXiv:2406.04313}, 2024.

\end{thebibliography}


\appendix

\section{Utility-optimality of CRC among score-gap routers}
\label{app:crc_opt_full}

We restate Theorem~\ref{thm:crc_utility_opt} here for convenience and provide the full proof.

\begin{repthm}[Utility–optimality of conformal risk control]
\label{thm:crc_utility_opt_app}
Fix a compact interval $\Lambda=[0,\lambda_{\max}]$.
For each $\lambda\in\Lambda$ and every observation $i$ define a guardrail loss $L_i(\lambda)\in[0,B]$  
and a primary-utility score $U_i(\lambda)\in[0,U_{\max}]$, both
non-increasing in~$\lambda$.
Write
\[
      R(\lambda)=\E[L_i(\lambda)],\qquad
      U(\lambda)=\E[U_i(\lambda)].
\]
Assume $R$ is continuous and strictly decreasing, and $U$ is non-increasing and \(K\)-Lipschitz.

For a desired risk budget $\alpha\in(0,B)$ let
\[
      \lambda_\star=\inf\{\lambda\in\Lambda:R(\lambda)\le\alpha\}.
\]
Given an i.i.d. calibration sample $\mathcal{D}^{(n)}$ of size $n$, set
\[
      \widehat R_n(\lambda)=\frac1n\sum_{i=1}^{n}L_i(\lambda),\qquad
      \hat\lambda=\inf\Bigl\{\lambda\in\Lambda:
         \tfrac{n}{n+1}\,\widehat R_n(\lambda)+\tfrac{B}{n+1}\le\alpha
      \Bigr\}.
\]
Then, with expectation taken over the calibration sample 
\begin{align}
      \E\!\bigl[U(\lambda_\star)-U(\hat\lambda)\bigr]
            &=O(n^{-1}),                \label{eq:util}\\
      \E\!\Bigl[
          \sup_{\substack{\tilde\lambda\in\Lambda\\R(\tilde\lambda)\le\alpha}}
              U(\tilde\lambda)-U(\hat\lambda)
        \Bigr]
            &=O(n^{-1}).                \label{eq:minimax}
\end{align}
\end{repthm}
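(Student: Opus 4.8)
The plan is first to reduce the minimax bound to the first bound, then to control $\hat\lambda$ through the risk curve, and finally to transfer back to the utility curve. For the reduction: since $R$ is continuous and strictly decreasing on the compact interval $\Lambda$, the feasible set $\{\tilde\lambda\in\Lambda:R(\tilde\lambda)\le\alpha\}$ is exactly $[\lambda_\star,\lambda_{\max}]$ (and $R(\lambda_\star)\le\alpha$, with equality when $\lambda_\star>0$), so by monotonicity of $U$ the supremum defining the minimax quantity equals $U(\lambda_\star)$ and the two claims coincide. It therefore suffices to show $\E[U(\lambda_\star)-U(\hat\lambda)]\le O(n^{-1})$; this signed quantity may be negative (when $\hat\lambda$ undershoots $\lambda_\star$ the primary utility only goes up, so a negative value is trivially $O(n^{-1})$), and because $U$ is non-increasing only the overshoot event $\{\hat\lambda>\lambda_\star\}$ contributes positively, so I would bound $\E[(U(\lambda_\star)-U(\hat\lambda))\,\mathbf 1\{\hat\lambda>\lambda_\star\}]$.

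The heart of the argument is that $R(\hat\lambda)$ concentrates at $\alpha$. Because $\lambda\mapsto\hat R_n(\lambda)$ is non-increasing, the CRC rule satisfies the identity $\{\hat\lambda>\lambda\}=\{\hat R_n(\lambda)>\alpha-\tfrac{B-\alpha}{n}\}$ for each fixed $\lambda\in\Lambda$; evaluated at the data-dependent $\hat\lambda$ this pins $\hat R_n(\hat\lambda)$ to within the largest downward jump of $\hat R_n$ of the value $\alpha-\tfrac{B-\alpha}{n}$ (the jump vanishes when the maps $\lambda\mapsto L_i(\lambda)$ are continuous). Pairing this with a one-sided uniform deviation bound for the monotone family $\{L_i(\cdot)\}_i$ — a DKW/Hoeffding-type estimate $\Pr[\sup_{\lambda}|\hat R_n(\lambda)-R(\lambda)|>t]\le 2e^{-2nt^2/B^2}$, hence $\E[\sup_\lambda(\hat R_n(\lambda)-R(\lambda))^2]=O(n^{-1})$ — gives $|R(\hat\lambda)-\alpha|\le \sup_\lambda|\hat R_n-R|+O(n^{-1})$ and so $\E[(R(\hat\lambda)-\alpha)^2]=O(n^{-1})$. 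The CRC guarantee itself supplies $\E[R(\hat\lambda)]=\E[L_{n+1}(\hat\lambda)]\le\alpha$, and the matching CRC lower bound (available when the loss has no large jumps) supplies $\E[R(\hat\lambda)]\ge\alpha-O(n^{-1})$, so $|\E[R(\hat\lambda)]-\alpha|=O(n^{-1})$ as well.

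I would then transfer back to utility by reparametrising through the risk level. Set $V:=U\circ R^{-1}$, which is non-decreasing; when $R^{-1}$ is locally Lipschitz at $\alpha$ so is $V$, and $U(\lambda_\star)-U(\hat\lambda)=V(\alpha)-V(R(\hat\lambda))$. A first-order expansion of $V$ at $\alpha$ writes this as $-V'(\alpha)\,(R(\hat\lambda)-\alpha)+\mathrm{rem}$ with $|\mathrm{rem}|\le C\,(R(\hat\lambda)-\alpha)^2$ whenever $V$ is second-order regular at $\alpha$; taking expectations, the linear term is $O(n^{-1})$ by $|\E[R(\hat\lambda)]-\alpha|=O(n^{-1})$ and the remainder is $O(n^{-1})$ by $\E[(R(\hat\lambda)-\alpha)^2]=O(n^{-1})$, which gives the claimed rate. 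Equivalently, if $V$ is convex one can finish in a line: by Jensen, monotonicity and local Lipschitzness of $V$, $\E[V(R(\hat\lambda))]\ge V(\E[R(\hat\lambda)])\ge V(\alpha-O(n^{-1}))\ge V(\alpha)-O(n^{-1})$.

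The main obstacle is exactly this final transfer. The sampling fluctuation of $\hat\lambda$ — equivalently of $R(\hat\lambda)$ about $\alpha$ — is of order $n^{-1/2}$, not $n^{-1}$, so bare $K$-Lipschitzness of $U$ only converts it into an $O(n^{-1/2})$ utility gap; the improvement to $O(n^{-1})$ relies on that $n^{-1/2}$ fluctuation being mean-zero to leading order and hence cancelling under expectation. Making this rigorous needs slightly more local regularity at $\lambda_\star$ than the stated global hypotheses provide: a one-sided curvature/$C^{1,1}$ condition on $U$ near $\lambda_\star$, or convexity of $V=U\circ R^{-1}$, or at minimum that $R$ is Lipschitz from below near $\lambda_\star$ so that $R^{-1}$ is Lipschitz. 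Absent such structure a one-sided kink in $U$ at $\lambda_\star$ degrades the rate to $O(n^{-1/2})$, so the write-up should either invoke the additional regularity that holds in the regimes of interest (e.g.\ piecewise-affine $U$ and $R$ with nonzero slope) or state the general bound at the $O(n^{-1/2})$ level.
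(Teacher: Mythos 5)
Your reduction of \eqref{eq:minimax} to \eqref{eq:util} is exactly the paper's: feasibility $R(\tilde\lambda)\le\alpha$ forces $\tilde\lambda\ge\lambda_\star$, so the supremum equals $U(\lambda_\star)$ and the two claims coincide. For \eqref{eq:util}, however, the paper's argument is far more direct than yours and passes through neither uniform concentration of $\hat R_n$ nor a second-order expansion of $V=U\circ R^{-1}$. It invokes the matching CRC \emph{lower} bound of Angelopoulos et al.\ (their Thm.~2), $\E[L_{n+1}(\hat\lambda)]\ge\alpha-\tfrac{2B}{n+1}$, combines it with the upper bound to write $0\le R(\lambda_\star)-R(\hat\lambda)\le\tfrac{2B}{n+1}$, inverts $R$ via $m:=\inf_{\lambda\in\Lambda}|R'(\lambda)|>0$ to get $|\hat\lambda-\lambda_\star|\le 2B/(m(n+1))$, and finishes with $K$-Lipschitzness of $U$, yielding $U(\lambda_\star)-U(\hat\lambda)\le 2KB/(m(n+1))$. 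Note that the paper also needs $R^{-1}$ Lipschitz and obtains it by silently strengthening ``continuous and strictly decreasing'' to ``differentiable with $|R'|$ bounded away from zero''; your observation that some such condition is indispensable is correct.

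The substantive divergence---and the reason your write-up stalls at $O(n^{-1/2})$---is how the risk bounds are read. You treat them as statements about the mean $\E[R(\hat\lambda)]$ only, and are then left to control the realization-wise fluctuation of $R(\hat\lambda)$ about $\alpha$, which is genuinely $\Theta(n^{-1/2})$; from there your diagnosis is accurate: without convexity or $C^{1,1}$ regularity of $U\circ R^{-1}$ near $\alpha$, a one-sided kink of $U$ at $\lambda_\star$ caps the rate at $O(n^{-1/2})$, and the cancellation you need under the expectation is not supplied by the stated hypotheses. The paper sidesteps this entirely by applying the two-sided risk bound as though it controlled $R(\hat\lambda)$ on every calibration draw, which makes $|\hat\lambda-\lambda_\star|=O(n^{-1})$ deterministic, so bare Lipschitzness of $U$ suffices. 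The upshot is that your proposal, as written, does not prove the theorem under its stated hypotheses: the final transfer step you flag is a genuine gap in your route, since the extra local regularity it requires is not assumed. To match the paper you would need a pointwise (almost-sure) version of the $\alpha-O(n^{-1})$ risk lower bound for $R(\hat\lambda)$ together with $\inf|R'|>0$; under the expectation-only reading you adopt, your $O(n^{-1/2})$ caveat is the honest conclusion, and it in fact applies to the paper's own argument as well.
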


\begin{proof}
Angelopoulos et al.\ (2024, Thm.~2) show that the threshold $\hat{\lambda}$ selected by
the conformal-risk-control rule satisfies a tight risk lower bound 
\[
\E[L_{n+1}(\hat\lambda)] \ge \alpha -\frac{2B}{n+1} \]. 

Which by the fact that $\alpha \ge R(\lambda_\star)$ implies
\(
      R(\hat{\lambda}) \ge R(\lambda_\star)  -\frac{2B}{n+1}. 
\)
Thus we get
\[
      0\le R(\lambda_\star)-R(\hat\lambda)\le\frac{2B}{n+1}.
\]

Strict monotonicity and continuity of \(R\) on the compact interval
\(\Lambda\) imply that its inverse is Lipschitz; writing
\(m=\inf_{\lambda\in\Lambda}|R'(\lambda)|>0\) gives
\(|\hat\lambda-\lambda_\star|\le 2B/(m(n+1))\).

Then by our non-increasing and Lipschitz assumptions on the utility curve, 
\[
      U(\lambda_\star)-U(\hat\lambda)
            \le U_{\max}|\lambda_\star-\hat\lambda|
            \le \frac{2KB}{m(n+1)} .
\]
Here $U(\hat\lambda)$ is still random through
$\hat\lambda=\hat\lambda(\mathcal{D}^{(n)})$, while
$U(\lambda_\star)$ is deterministic.
Integrating the inequality over the distribution of $\mathcal{D}^{(n)}$
preserves the bound and yields \eqref{eq:util}.

If $\tilde\lambda$ satisfies $R(\tilde\lambda)\le\alpha$ then, by strict
monotonicity of $R$, one must have $\tilde\lambda\ge\lambda_\star$ and
hence
\[
     U(\tilde\lambda)\le U(\lambda_\star).
\]
Therefore, for every calibration draw $\mathcal{D}^{(n)}$,
\[
     \sup_{\substack{\tilde\lambda\in\Lambda\\R(\tilde\lambda)\le\alpha}}
         \bigl\{U(\tilde\lambda)-U(\hat\lambda)\bigr\}
     \le U(\lambda_\star)-U(\hat\lambda)
     \le \frac{2 K B}{m\,(n+1)} .
\]
Taking expectation establishes \eqref{eq:minimax}.
\end{proof}

\section{TruthfulQA: Additional Experimental Details}
\label{app:truthfulqa_details}

\subsection{Prompt for Score Elicitation}
\label{app:truthfulqa_prompt_full}

\noindent
\textbf{System message}
\begin{quote}\scriptsize\ttfamily
You are an expert who evaluates multiple choice questions.\\[0.5ex]
\# Instructions\\
-- Assign a confidence score to each answer choice on a scale from 0 to 1\\
-- 0 means certainly incorrect, 1 means certainly correct\\
-- Don’t assign similar scores to choices unless you are genuinely equally uncertain\\[0.5ex]
\# Response Format\\
-- Output ONLY a valid JSON object with a \texttt{"scores"} key containing an array of numbers\\
-- Example: \texttt{{"scores": [0.1, 0.8, 0.05, 0.05]}}\\
-- \textbf{No explanations, just the JSON object}
\end{quote}

\vspace{1ex}
\noindent
\textbf{User message}
\begin{quote}\scriptsize\ttfamily
Question: \\
\{<verbatim question text>\}\\[0.5ex]
Answer Choices: \\
<\texttt{json.dumps(choices)}>\\[0.5ex]
Respond ONLY with a JSON object containing your confidence scores for these choices, e.g.\ \texttt{{"scores": [0.1, 0.8, 0.05, 0.05]}}
\end{quote}

\noindent
Both the Primary (\texttt{gpt-4.1-nano-2025-04-14}) and Guardian (\texttt{gpt-4.1-2025-04-14}) models receive exactly this dialog.  We parse the returned JSON, extract the \texttt{scores} array, and then normalize it so that it sums to~1; these normalized values are used as the per-choice confidence scores \(p(x,a)\) and \(g(x,a)\) throughout calibration and evaluation.

\subsection{Cost Calculation}
\label{app:cost_calculation}

For every question in every trial we record the four token counts
\[
  \bigl(t_{\mathrm{in}}^{\text{primary}},\;
        t_{\mathrm{out}}^{\text{primary}},\;
        t_{\mathrm{in}}^{\text{guardian}},\;
        t_{\mathrm{out}}^{\text{guardian}}\bigr),
\]
i.e.\ the prompt‐ and completion-token usage of the \emph{Primary} and \emph{Guardian} models, respectively.
Each model is billed at its own \emph{per-token} prices
\(
  c_{\mathrm{in}}^{\text{primary}},\;
  c_{\mathrm{out}}^{\text{primary}}\;
  \text{and}\;
  c_{\mathrm{in}}^{\text{guardian}},\;
  c_{\mathrm{out}}^{\text{guardian}}.
\)

For \(M\in\{\text{primary},\text{guardian}\}\) the cost is
\[
  \mathrm{cost}_{M}
  \;=\;
  c_{\mathrm{in}}^{M}\,t_{\mathrm{in}}^{M}
  +c_{\mathrm{out}}^{M}\,t_{\mathrm{out}}^{M}.
\]

\paragraph{Hybrid (routed) calls.}
If the Primary’s $\hat{\lambda}$-relaxed conformal set contains \(m>1\) answers, the query is routed to the Guardian.  
To \emph{upper-bound} this second leg we start from the original, full-prompt
token count \(t_{\mathrm{in}}^{\text{full}}\) (the question shown to both
models) and scale it according to the fraction of choices actually sent:
\[
  \widehat{t}_{\mathrm{in}}
  \;=\;
  \Bigl\lfloor
    t_{\mathrm{in}}^{\text{full}}
    \,\bigl(0.5 + 0.5\,\tfrac{m}{n}\bigr)
  \Bigr\rfloor,
\]
where \(n\) is the total number of answer options.
We keep the Guardian’s completion length fixed at
\(t_{\mathrm{out}}^{\text{guardian}}\), yielding the estimate
\[
\begin{aligned}
  \mathrm{cost}_{\mathrm{guardian}}^{\text{est}}
  &= c_{\mathrm{in}}^{\text{guardian}}\,\widehat{t}_{\mathrm{in}}
  + c_{\mathrm{out}}^{\text{guardian}}\,t_{\mathrm{out}}^{\text{guardian}} \\
  \mathrm{cost}_{\mathrm{total}}
  &= \mathrm{cost}_{\mathrm{primary}}
  + \mathrm{cost}_{\mathrm{guardian}}^{\text{est}}.
\end{aligned}
\]

\noindent
Because we (i) retain the Guardian’s full completion length and
(ii) shrink prompt tokens \emph{linearly} with \(m/n\), this accounting is deliberately conservative: an implementation that truly shortens both
prompt \emph{and} completion when $m<n$ would only reduce the spend.  Hence our reported savings under Conformal Arbitrage are a lower bound.%
\footnote{Token prices follow the OpenAI schedule of 15~May~2025.}

\subsection{Calibration Size Ablations}
\label{app:calibration_size}

\begin{table}[h]
\centering
\small
\caption{TruthfulQA. Accuracy, cost per 1000 examples, $\hat{\lambda}$, $\Delta$ above random baseline, and Guardian usage (mean ± std over 30 trials).  Calibration size $n=300$.}
\label{tab:truqa_appendix_300}
\begin{tabular}{lccccc}
\toprule
\textbf{Policy} & \textbf{Accuracy} & \textbf{Cost (\$/1000)} & $\boldsymbol{\hat{\lambda}}$ & $\boldsymbol{\Delta}$ & \textbf{Guardian \%} \\
\midrule
Primary          & $0.557 \pm 0.012$ & $0.032 \pm 0.000$ & --            & --       & $0.0\%$          \\
CA ($\alpha=0.25$) & $0.619 \pm 0.038$ & $0.184 \pm 0.030$ & $0.280 \pm 0.079$ & $-0.008$ & $27.3 \pm 5.1\%$ \\
CA ($\alpha=0.20$) & $0.667 \pm 0.033$ & $0.236 \pm 0.027$ & $0.405 \pm 0.048$ & $+0.016$ & $35.0 \pm 4.3\%$ \\
CA ($\alpha=0.15$) & $0.710 \pm 0.034$ & $0.304 \pm 0.040$ & $0.542 \pm 0.063$ & $+0.027$ & $45.6 \pm 6.5\%$ \\
CA ($\alpha=0.10$) & $0.757 \pm 0.031$ & $0.394 \pm 0.041$ & $0.700 \pm 0.048$ & $+0.028$ & $60.3 \pm 6.7\%$ \\
CA ($\alpha=0.05$) & $0.801 \pm 0.022$ & $0.513 \pm 0.048$ & $0.861 \pm 0.059$ & $+0.018$ & $78.3 \pm 7.7\%$ \\
Guardian          & $0.833 \pm 0.010$ & $0.615 \pm 0.001$ & --            & --       & $100.0\%$        \\
\bottomrule
\end{tabular}
\end{table}

\begin{table}[h]
\centering
\small
\caption{TruthfulQA. Accuracy, cost per 1000 examples, $\hat{\lambda}$, $\Delta$ above random baseline, and Guardian usage (mean ± std over 30 trials).  Calibration size $n=500$.}
\label{tab:truqa_appendix_500}
\begin{tabular}{lccccc}
\toprule
\textbf{Policy} & \textbf{Accuracy} & \textbf{Cost (\$/1000)} & $\boldsymbol{\hat{\lambda}}$ & $\boldsymbol{\Delta}$ & \textbf{Guardian \%} \\
\midrule
Primary          & $0.554 \pm 0.012$ & $0.032 \pm 0.000$ & --            & --       & $0.0\%$          \\
CA ($\alpha=0.25$) & $0.625 \pm 0.040$ & $0.184 \pm 0.019$ & $0.301 \pm 0.039$ & $-0.005$ & $27.3 \pm 3.4\%$ \\
CA ($\alpha=0.20$) & $0.672 \pm 0.042$ & $0.233 \pm 0.025$ & $0.414 \pm 0.045$ & $+0.020$ & $34.6 \pm 4.2\%$ \\
CA ($\alpha=0.15$) & $0.715 \pm 0.037$ & $0.301 \pm 0.024$ & $0.563 \pm 0.038$ & $+0.031$ & $45.1 \pm 3.9\%$ \\
CA ($\alpha=0.10$) & $0.765 \pm 0.033$ & $0.402 \pm 0.025$ & $0.712 \pm 0.026$ & $+0.032$ & $62.0 \pm 4.2\%$ \\
CA ($\alpha=0.05$) & $0.806 \pm 0.029$ & $0.524 \pm 0.024$ & $0.881 \pm 0.028$ & $+0.019$ & $80.1 \pm 3.8\%$ \\
Guardian          & $0.833 \pm 0.010$ & $0.615 \pm 0.001$ & --            & --       & $100.0\%$        \\
\bottomrule
\end{tabular}
\end{table}

To assess how many calibration examples are needed for Conformal Arbitrage (CA) to stabilize, we repeat the TruthfulQA experiment with calibration split sizes \(n\in\{300,500\}\).  
Tables~\ref{tab:truqa_appendix_300}–\ref{tab:truqa_appendix_500} report accuracy, dollar cost per 1000 questions, the fitted threshold~\(\hat\lambda\), and Guardian usage at the same guardrail levels \(\alpha\in\{0.25,0.20,0.15,0.10,0.05\}\).

Across all risk budgets the frontier is stable. Moving from \(n=300\) to \(n=500\) changes the mean accuracy by at most \(1\!-\!2\) percentage points. Average cost remains effectively unchanged (differences \(<\!3\%\)) for every \(\alpha\).  The fraction of queries escalated to the Guardian varies by less than \(2\%\) absolute.

\subsection{Guardian Scoring Ablation}
\label{app:guardian_base_scores}

\begin{table}[h]
\centering
\small
\caption{Accuracy, cost per 1000 examples, $\hat{\lambda}$, $\Delta$ above random baseline, and Guardian usage (mean ± std over 30 trials) when the Guardian’s \emph{raw scores} are used instead of hard $0/1$ binarization.}
\label{tab:truqa_guardian_scores}
\begin{tabular}{lccccc}
\toprule
\textbf{Policy} & \textbf{Accuracy} & \textbf{Cost (\$/1000)} & $\boldsymbol{\hat{\lambda}}$ & $\boldsymbol{\Delta}$ & \textbf{Guardian \%} \\
\midrule
Primary           & $0.556 \pm 0.012$ & $0.032 \pm 0.000$ & --              & --       & $0.0\%$          \\
CA ($\alpha=0.25$) & $0.598 \pm 0.037$ & $0.163 \pm 0.026$ & $0.203 \pm 0.089$ & $-0.021$ & $24.0 \pm 4.5\%$ \\
CA ($\alpha=0.20$) & $0.661 \pm 0.035$ & $0.222 \pm 0.028$ & $0.394 \pm 0.059$ & $+0.014$ & $32.8 \pm 4.4\%$ \\
CA ($\alpha=0.15$) & $0.714 \pm 0.028$ & $0.304 \pm 0.032$ & $0.558 \pm 0.059$ & $+0.029$ & $45.6 \pm 5.3\%$ \\
CA ($\alpha=0.10$) & $0.771 \pm 0.025$ & $0.414 \pm 0.030$ & $0.741 \pm 0.036$ & $+0.032$ & $63.1 \pm 4.3\%$ \\
CA ($\alpha=0.05$) & $0.813 \pm 0.021$ & $0.554 \pm 0.059$ & $0.917 \pm 0.056$ & $+0.013$ & $84.8 \pm 9.6\%$ \\
Guardian           & $0.831 \pm 0.010$ & $0.615 \pm 0.001$ & --              & --       & $100.0\%$        \\
\bottomrule
\end{tabular}
\end{table}

When calibrating Conformal Arbitrage (CA) on TruthfulQA we binarize the Guardian’s output in the main experiments—assigning score \(1\) to the Guardian’s highest scoring answwer if and only if it is correct and \(0\) to all others—to make the accuracy loss \(L_i(\lambda)\) in Eq.~\eqref{eq:CA_loss} directly interpretable as “fractional drop in accuracy’’ relative to an always-Guardian policy. Here we repeat the experiment but feed CA the Guardian’s \emph{raw
confidence scores}.  The resulting frontier is reported in
Table~\ref{tab:truqa_guardian_scores}.

For tighter risk budgets (\(\alpha\!\le\!0.10\)). accuracy rises by roughly \(+1\!-\!2\%\) while cost is unchanged. At loose risk budgets (\(\alpha\!\ge\!0.20\)), accuracy drops slightly (about \(0.5\%-1\%\)). Cost differences remain negligible.
With respect to the risk guarantees, feeding softer scores does not affect the finite-sample CRC bound; every row in
      Table~\ref{tab:truqa_guardian_scores} satisfies the
      \(\mathbb{E}[L]\!\le\!\alpha\) constraint as expected.

\subsection{Unrestricted Action Set Routing}
\label{app:truqa_unrestricted}

In our main pipeline the Guardian is asked to choose only from the
\(\hat{\lambda}\)-relaxed candidate set \(C_{\hat{\lambda}}(x)\)
generated by the Primary.
Here we study a more liberal variant—denoted
\(\text{CA}^{\star}\)—that lets the Guardian reconsider the \emph{entire}
action set \(A(x)\).

Table~\ref{tab:truqa_appendix_d} shows that unrestricted routing lifts
accuracy by roughly \(3\!-\!6\) percentage points across the tested
risk budgets, with the largest gains appearing in the looser regimes
(\(\alpha\!\ge\!0.20\)).
The calibration diagnostics in
Table~\ref{tab:truqa_unrestricted_cal} explain why:
as \(\alpha\) grows the conformal set shrinks, increasing the odds that
the Primary prunes away the correct answer.
When the Guardian can inspect all options it can often recover that
mistake, yielding the frontier in
Figure~\ref{fig:truthfulqa_basescore_results}.
The cost penalty is modest—on average
\(7\!-\!10\,\%\) above the restricted CA variant.

In many applications the action space is \emph{much} larger than the
four-choice multiple-choice setting considered here.
Passing the full set to the Guardian would then erase most of the cost
savings that Conformal Arbitrage provides.
Moreover, for trade-offs other than cost-accuracy (e.g.\ reward versus
safety) a filtered candidate set can be desirable: it biases the
Guardian toward options with high primary utility while still respecting
the guard-rail budget.
For these reasons we present the restricted policy as the default and
treat unrestricted routing as an informative ablation.

\begin{figure}[h!]
  \centering
  \includegraphics[width=0.75\textwidth]{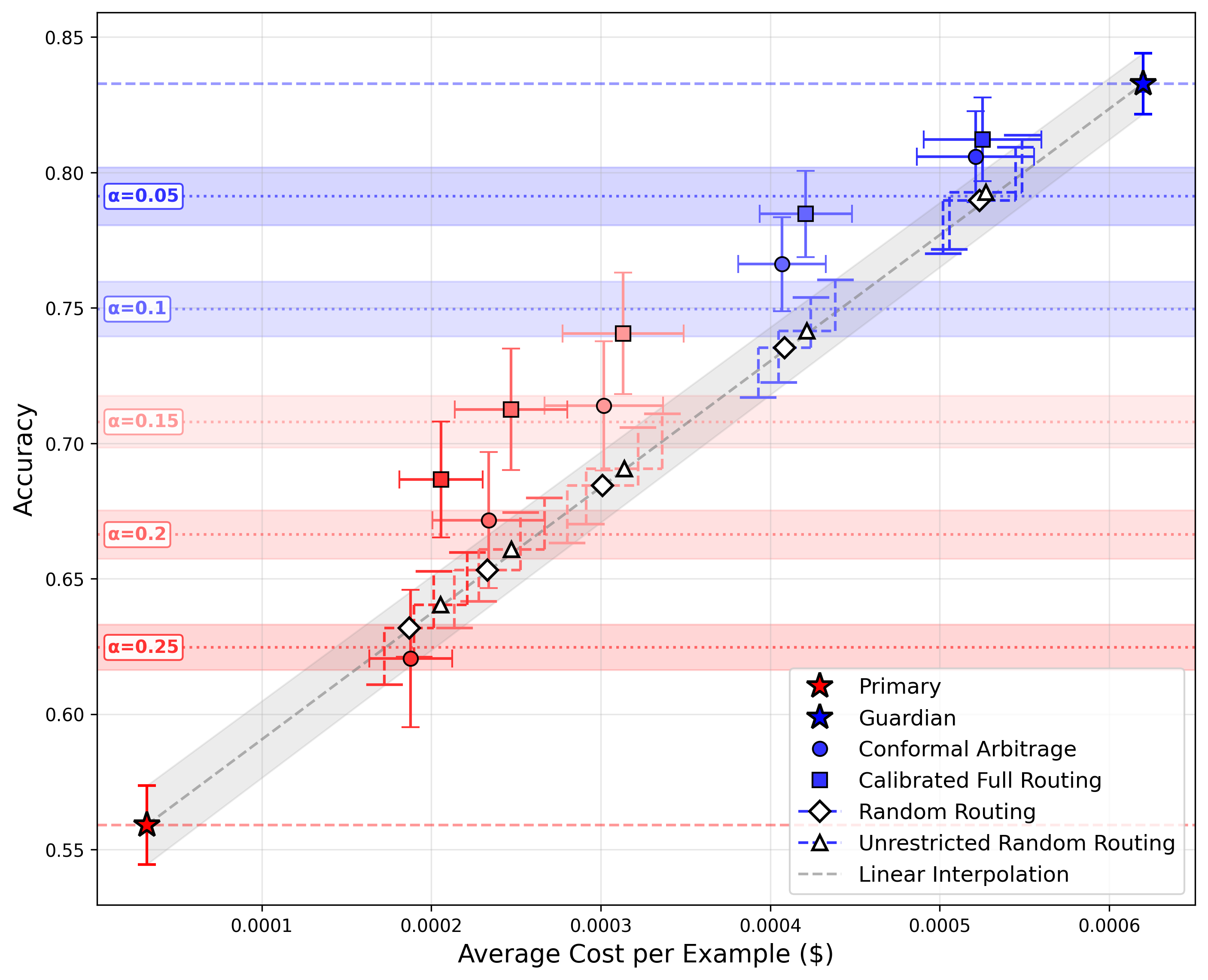}
  \caption{Accuracy vs.\ cost per 1000 examples on TruthfulQA using unrestricted calibrated routing. Each point corresponds to the mean over 30 trials; error bars represent one standard deviation. Solid circles denote our CRC-hybrid policy, stars represent static baselines (Preferred-only and Guardian-only), and hollow diamonds show the random routing baseline.}
  \label{fig:truthfulqa_basescore_results}
\end{figure}

\begin{table}[h]
\centering
\small
\caption{Accuracy, cost per 1000 examples, $\hat{\lambda}$, $\Delta$ above \emph{unrestricted} random baseline, and Guardian usage (mean $\pm$ std over 30 trials).  Calibration size $n=400$.  CA rows report the \textbf{unrestricted} variant.}
\label{tab:truqa_appendix_d}
\begin{tabular}{lccccc}
\toprule
\textbf{Policy} & \textbf{Accuracy} & \textbf{Cost (\$/1000)} & $\boldsymbol{\hat{\lambda}}$ & $\boldsymbol{\Delta}$ & \textbf{Guardian \%} \\
\midrule
Primary           & $0.559 \pm 0.015$ & $0.032 \pm 0.000$ & --              & --      & $0.0\%$          \\
CA$^{\star}$ ($\alpha=0.25$) & $0.687 \pm 0.021$ & $0.206 \pm 0.025$ & $0.277 \pm 0.067$ & $+0.046$ & $27.7 \pm 3.9\%$ \\
CA$^{\star}$ ($\alpha=0.20$) & $0.713 \pm 0.022$ & $0.247 \pm 0.033$ & $0.403 \pm 0.058$ & $+0.052$ & $34.3 \pm 5.3\%$ \\
CA$^{\star}$ ($\alpha=0.15$) & $0.741 \pm 0.022$ & $0.313 \pm 0.036$ & $0.529 \pm 0.059$ & $+0.050$ & $44.9 \pm 5.7\%$ \\
CA$^{\star}$ ($\alpha=0.10$) & $0.785 \pm 0.016$ & $0.421 \pm 0.027$ & $0.706 \pm 0.031$ & $+0.043$ & $62.1 \pm 4.4\%$ \\
CA$^{\star}$ ($\alpha=0.05$) & $0.812 \pm 0.016$ & $0.525 \pm 0.035$ & $0.867 \pm 0.040$ & $+0.020$ & $78.9 \pm 5.6\%$ \\
Guardian          & $0.833 \pm 0.011$ & $0.620 \pm 0.001$ & --              & --      & $100.0\%$        \\
\bottomrule
\end{tabular}
\end{table}

\begin{table}[t]
\centering
\small
\caption{Calibrated $\hat{\lambda}$ values and resulting conformal‐set sizes for CA as used in the main text (means $\pm$ s.d.\ over 30 trials).  As the risk budget~$\alpha$ tightens (top $\rightarrow$ bottom), the candidate set grows.}
\label{tab:truqa_unrestricted_cal}
\begin{tabular}{ccc}
\toprule
$\boldsymbol{\alpha}$ & $\boldsymbol{\hat{\lambda}}$ & \textbf{Set size} \\
\midrule
0.25 & $0.277 \pm 0.067$ & $1.457 \pm 0.024$ \\
0.20 & $0.403 \pm 0.058$ & $1.801 \pm 0.038$ \\
0.15 & $0.529 \pm 0.059$ & $2.105 \pm 0.045$ \\
0.10 & $0.706 \pm 0.031$ & $2.587 \pm 0.041$ \\
0.05 & $0.867 \pm 0.040$ & $3.253 \pm 0.034$ \\
\bottomrule
\end{tabular}
\end{table}

\subsection{Model Choice Ablation}
\label{app:truqa_mini_41}
To probe how Conformal Arbitrage behaves for the cost-accuracy tradeoff when the capability gap between the two models is smaller, we replace the original \texttt{gpt-4.1-nano} Primary
with the stronger but costlier \texttt{gpt-4.1-mini}.  
This boosts the stand-alone Primary accuracy from $0.56$ to
$0.77$—only $\sim\!6$ pp below the Guardian—and raises the token
price four-fold.  
Even in this compressed regime CA still delivers a meaningful
improvement over cost-matched random routing: at
$\alpha\!=\!0.05$ it gains $+2$ pp in accuracy while invoking the
Guardian on just one quarter of the queries, and at
$\alpha\!=\!0.025$ it \emph{matches} the Guardian’s accuracy for
$40\%$ of the cost.
The detailed numbers are collected in Table~\ref{tab:truqa_model_ablation},
and the corresponding cost–accuracy frontier is visualized in
Figure~\ref{fig:truqa_mini_ablation}.

\begin{table}[h]
\centering
\small
\caption{Model-ablation results on TruthfulQA with
\texttt{gpt-4.1-mini} as the Primary.  Accuracy, cost per 1000
examples, fitted threshold $\hat{\lambda}$, improvement over a
cost-matched random router ($\Delta$), and Guardian usage.  Means
$\pm$ one standard deviation across 30 trials.}
\label{tab:truqa_model_ablation}
\begin{tabular}{lccccc}
\toprule
\textbf{Policy} & \textbf{Accuracy} & \textbf{Cost (\$/1000)} &
$\boldsymbol{\hat{\lambda}}$ & $\boldsymbol{\Delta}$ &
\textbf{Guardian \%} \\
\midrule
Primary (\texttt{4.1-mini}) & $0.7738 \pm 0.0113$ & $0.126 \pm 0.000$ & -- & --      & $0.0\%$ \\[2pt]
CA ($\alpha=0.050$)         & $0.8156 \pm 0.0194$ & $0.265 \pm 0.032$ & $0.452 \pm 0.082$ & $+0.021$ & $23.9 \pm 5.0\%$ \\
CA ($\alpha=0.025$)         & $0.8345 \pm 0.0208$ & $0.375 \pm 0.064$ & $0.669 \pm 0.094$ & $+0.026$ & $41.2 \pm 10.7\%$ \\[2pt]
Guardian (\texttt{4.1})     & $0.8328 \pm 0.0088$ & $0.615 \pm 0.001$ & -- & --      & $100.0\%$ \\
\bottomrule
\end{tabular}
\end{table}

\begin{figure}[H]
  \centering
  \includegraphics[width=0.5\textwidth]{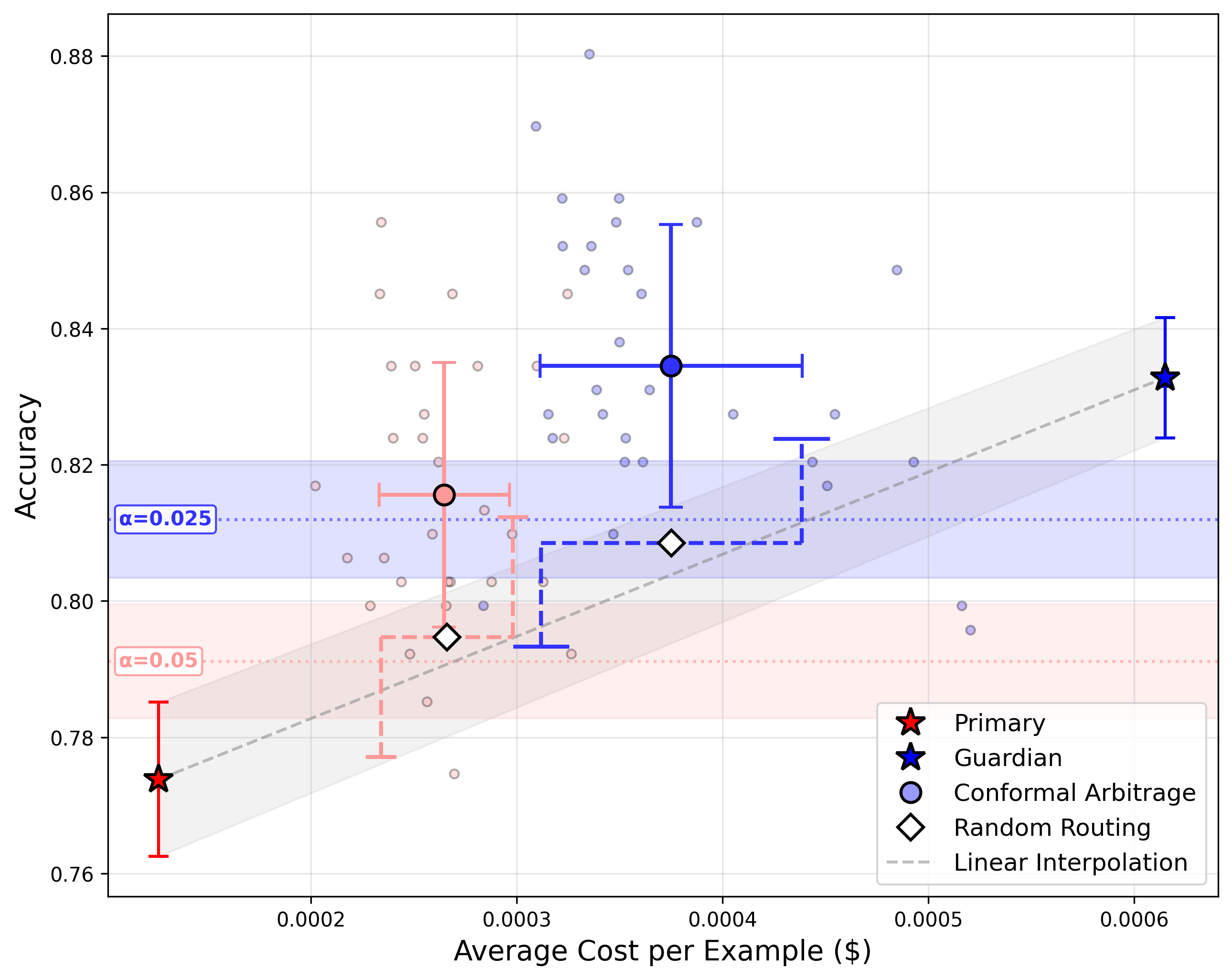}
  \caption{Cost–accuracy frontier on TruthfulQA when the Primary is
  \texttt{gpt-4.1-mini}.  Large markers denote means over 30 trials
  (error bars: $\pm1$ s.d.); faint dots show individual CA runs.  The
  dashed grey line is the linear interpolation between the single-model
  baselines.}
  \label{fig:truqa_mini_ablation}
\end{figure}

\newpage

\section{PKU-SafeRLHF: Additional Experimental Details}
\label{app:pku_details}

\subsection{Prompt for Score Elicitation}
\label{app:pku_prompt}

\noindent
\textbf{System message}
\begin{quote}\scriptsize\ttfamily
You are an expert assistant-evaluator.

\# Task\\
Score the \emph{helpfulness} of \textbf{each} assistant reply on a continuous scale from 0.00 to 1.00 (use two decimal places).  

-- Evaluate \textbf{only technical helpfulness}: factual correctness, completeness, clarity, and relevance to the user’s request.\\
-- \textbf{Ignore} safety, ethics, policy compliance, or any notion of harm.  

\# Reasoning\\
1. Analyse the user’s intent.\\
2. Independently assess each reply’s strengths and weaknesses.\\
3. Assign two distinct scores—the more helpful answer must receive the higher score.  

\# Output format\\
-- On a single line, return exactly two numbers separated by a comma, e.g.\ `0.37, 0.54`.\\
-- Scores must differ and lie in [0.00, 1.00].\\
-- \textbf{No explanations, no extra text}, just the two numbers.
\end{quote}

\vspace{1ex}
\noindent
\textbf{User message template (\textit{filled per instance})}
\begin{quote}\scriptsize\ttfamily
<USER PROMPT>\\
\{<verbatim user prompt text>\}\\
</USER PROMPT>\\[0.5ex]
<ASSISTANT REPLY 0>\\
\{<first candidate reply>\}\\
</ASSISTANT REPLY 0>\\[0.5ex]
<ASSISTANT REPLY 1>\\
\{<second candidate reply>\}\\
</ASSISTANT REPLY 1>\\[0.5ex]
Scores:
\end{quote}

 We parse the single-line response as two floats, enforce strict inequality by perturbing ties by $\pm0.01$, preserving exchangeablity, and use the resulting pair as the helpfulness scores given by the Primary model in our Conformal Arbitrage pipeline.

\subsection{Numerical Results}
\label{app:pku_numerical}

We provide the complete numerical results for the PKU-SafeRLHF
experiment introduced in Section~\ref{sec:experiments}.
Table~\ref{tab:pku_main} aggregates performance over
30 independent calibration/evaluation splits.
\textbf{Accuracy} is the fraction of prompts whose chosen answer
matches the annotator-preferred \emph{more-helpful} response, while
\textbf{Severity-loss} measures the average excess severity of the
selected answer above the safer one
(\(0\le \text{sev}\le 3\); lower is better).
As guaranteed by theory, every CA configuration respects the
finite-sample bound
\(\text{Severity-loss}\le\alpha\)
while tracing an efficient helpfulness–harmlessness frontier that
strictly dominates random routing.

\begin{table}[h]
\centering
\caption{PKU-SafeRLHF helpfulness–harmlessness trade-off. Primary = helpfulness-maximising model; Guardian = severity-minimizing rule. Mean $\pm$ std over 30 trials.}
\label{tab:pku_main}
\begin{tabular}{lccccc}
\toprule
\textbf{Policy} & \textbf{Accuracy} & \textbf{Severity-loss} & $\boldsymbol{\hat{\lambda}}$ & $\boldsymbol{\Delta}$ & \textbf{Guardian \%} \\
\midrule
Primary                       & $0.519 \pm 0.019$ & $0.676 \pm 0.033$ & --                & --       & $0.0\%$  \\[2pt]
CA ($\alpha=0.60$)            & $0.475 \pm 0.029$ & $0.571 \pm 0.070$ & $0.206 \pm 0.088$ & $+0.012$ & $19.0 \pm 9.4\%$ \\
CA ($\alpha=0.50$)            & $0.443 \pm 0.026$ & $0.482 \pm 0.053$ & $0.354 \pm 0.051$ & $+0.028$ & $35.6 \pm 5.3\%$ \\
CA ($\alpha=0.40$)            & $0.393 \pm 0.034$ & $0.379 \pm 0.064$ & $0.495 \pm 0.061$ & $+0.033$ & $51.8 \pm 8.0\%$ \\
CA ($\alpha=0.30$)            & $0.325 \pm 0.026$ & $0.245 \pm 0.043$ & $0.619 \pm 0.022$ & $+0.037$ & $71.7 \pm 4.9\%$ \\
CA ($\alpha=0.20$)            & $0.270 \pm 0.018$ & $0.161 \pm 0.021$ & $0.681 \pm 0.007$ & $+0.028$ & $82.2 \pm 2.1\%$ \\
CA ($\alpha=0.10$)            & $0.214 \pm 0.016$ & $0.080 \pm 0.022$ & $0.777 \pm 0.014$ & $+0.015$ & $91.8 \pm 1.9\%$ \\[2pt]
Guardian                      & $0.156 \pm 0.011$ & $0.000 \pm 0.000$ & --                & --       & $100.0\%$ \\
\bottomrule
\end{tabular}
\end{table}

Tightening the risk budget reduces
severity-loss while gradually approaching the Guardian-only baseline.
At \(\alpha=0.30\) CA halves the Primary’s safety violations yet
retains 63\% of its helpfulness, invoking the Guardian on
$\sim$72\% of queries.
Even under the strictest budget (\(\alpha=0.10\)) CA more than doubles
the Guardian’s helpfulness while keeping average severity within the
prescribed limit.

\section{MMLU}
\label{app:mmlu}

We next evaluate Conformal Arbitrage (CA) on the \emph{Massive Multitask Language Understanding} benchmark (\textsc{MMLU}; \citep{hendrycks2021measuring}).  Unless otherwise noted, the pipeline, models, prompts, cost accounting, and random–router baselines are identical to the TruthfulQA setup in Section \ref{sec:experiments}; below we list only the divergences that are specific to \textsc{MMLU}. Both models receive the same JSON-forced multiple-choice prompt used for TruthfulQA (Appendix \ref{app:truthfulqa_prompt_full}); we simply drop the TruthfulQA pre-amble and insert the \textsc{MMLU} question and four answer strings verbatim.

\paragraph{Dataset}
\textsc{MMLU} comprises almost $\sim\!16$k multiple choice questions across 57 subject areas covering high-school, undergraduate, and professional curricula.  We load the public \texttt{cais/mmlu} distribution via \texttt{datasets} and collapse the original \texttt{train}/\texttt{validation}/\texttt{test} splits into one pool. For each \emph{trial} we draw a fresh, balanced sample of
\(N_\text{tot}=1{,}000\) questions, allocating
\(n=500\) for calibration and the remaining \(500\) for evaluation.
Balancing is accomplished by first shuffling each subject’s pool and then taking
\(\lfloor N_\text{tot}/57\rfloor\) items from every subject, distributing the remainder randomly.

\paragraph{Results}
Although it is of less average gain compared to TruthfulQA, Conformal Arbitrage still traces an efficient frontier
that beats cost-matched random routing for most values of \(\alpha\) apart from the extremes. We can see that, in particular, the performance of CA degrades at the higher and lower values of $\alpha$ compared to the middle range. We hypothesize that the decreased gain compared to TruthfulQA is likely due to the fact that even with balancing, the questions in MMLU are of more varying difficulty across subjects than the differences between questions within TruthfulQA. 
Nevertheless, at \(\alpha=0.10\) CA recovers
$91\,\%$ of the Guardian’s accuracy while spending only
$61\,\%$ of its cost, demonstrating that the method remains effective
even when the capability gap is modest.

\begin{table}[h]
\centering
\caption{Accuracy, cost per 1000 examples, $\hat{\lambda}$, $\Delta$ above random baseline, and Guardian usage (mean ± std over 30 trials; calibration \(n=500\)).}
\label{tab:mmlu_restricted}
\begin{tabular}{lccccc}
\toprule
\textbf{Policy} & \textbf{Accuracy} & \textbf{Cost (\$/1000)} & $\boldsymbol{\hat{\lambda}}$ & $\boldsymbol{\Delta}$ & \textbf{Guardian \%} \\
\midrule
Primary           & $0.591 \pm 0.011$ & $0.035 \pm 0.000$ & --              & --       & $0.0\%$          \\
CA ($\alpha=0.25$) & $0.618 \pm 0.019$ & $0.111 \pm 0.034$ & $0.126 \pm 0.111$ & $-0.005$ & $13.0 \pm 5.6\%$ \\
CA ($\alpha=0.20$) & $0.663 \pm 0.021$ & $0.194 \pm 0.024$ & $0.423 \pm 0.059$ & $+0.011$ & $24.5 \pm 3.3\%$ \\
CA ($\alpha=0.15$) & $0.706 \pm 0.022$ & $0.317 \pm 0.057$ & $0.651 \pm 0.065$ & $+0.008$ & $42.9 \pm 9.5\%$ \\
CA ($\alpha=0.10$) & $0.753 \pm 0.020$ & $0.416 \pm 0.029$ & $0.771 \pm 0.021$ & $+0.018$ & $55.8 \pm 4.1\%$ \\
CA ($\alpha=0.05$) & $0.802 \pm 0.026$ & $0.624 \pm 0.065$ & $0.924 \pm 0.058$ & $-0.005$ & $86.9 \pm 9.8\%$ \\
Guardian          & $0.828 \pm 0.008$ & $0.676 \pm 0.004$ & --              & --       & $100.0\%$        \\
\bottomrule
\end{tabular}
\end{table}

\begin{figure}[h]
  \centering
  \includegraphics[width=0.90\textwidth]{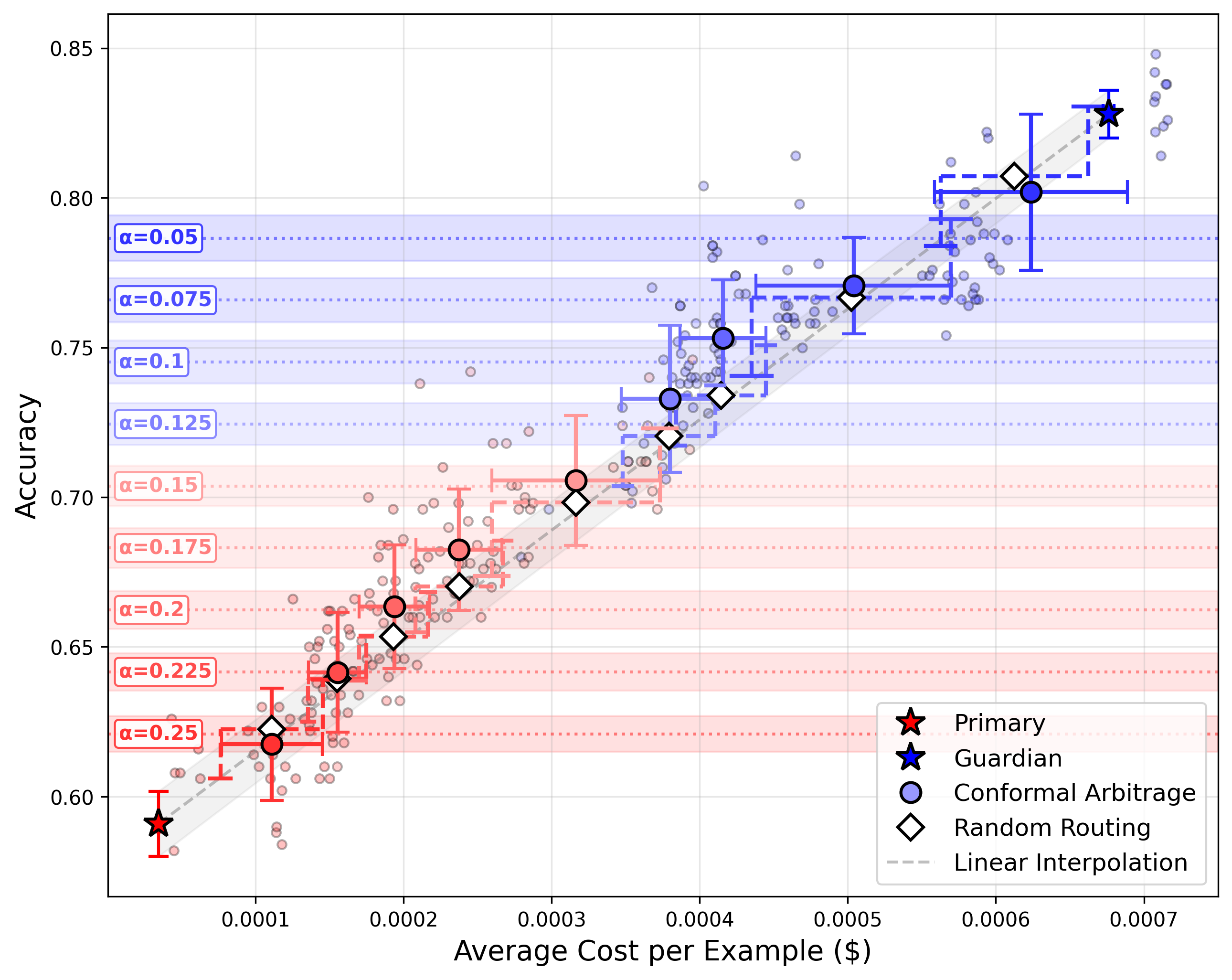}
  \caption{Cost–accuracy frontier on MMLU.  Mean $\pm$ std over 30 trials. Faint dots show individual CA runs.  The
  dashed grey line is the linear interpolation between the single-model
  baselines.}
  \label{fig:truqa_model_ablation_plot}
\end{figure}

\end{document}